\declaretheorem{theorem}
\newtheorem*{problem}{Problem}
\newcommand\mysubsection{\@startsection{paragraph}{4}{\z@}{3\p@ \@plus \p@}{-5\p@}{\normalsize\bfseries}}
\def \myalg {RSDNE}
\def \mydeepalg {RECT}
\def\@seccntformatinl#1{\csname the#1dis\endcsname\hskip 1em\relax}
\begin{document}
\title{Network Embedding with Completely-imbalanced Labels}
\author{Zheng~Wang,
        Xiaojun~Ye,
        Chaokun~Wang, \IEEEmembership{Member,~IEEE},
        Jian~Cui,
        and Philip~S.~Yu,~\IEEEmembership{Fellow,~IEEE}
\IEEEcompsocitemizethanks{
\IEEEcompsocthanksitem Z. Wang and J. Cui are with the Department of Computer Science and Technology, University of Science and Technology Beijing, Beijing, China.\protect\\
E-mail: wangzheng@ustb.edu.cn, g20178653@xs.ustb.edu.cn.

\IEEEcompsocthanksitem X. Ye and C. Wang are with the School of Software, Tsinghua University, Beijing, China.\protect\\
E-mail: \{yexj, chaokun\}@tsinghua.edu.cn.

*Corresponding author: Chaokun Wang.
\IEEEcompsocthanksitem P.S. Yu is with the Department of Computer Science, University of Illinois at Chicago IL 60607. E-mail: psyu@cs.uic.edu.
}
\thanks{Accepted by IEEE TKDE (DOI 10.1109/TKDE.2020.2971490).}
}

\markboth{IEEE TRANSACTIONS ON KNOWLEDGE AND DATA ENGINEERING}%
{Shell \MakeLowercase{\textit{et al.}}: Bare Demo of IEEEtran.cls for Computer Society Journals}

\IEEEcompsoctitleabstractindextext{%
\begin{abstract}\label{section_abstract}
Network embedding, aiming to project a network into a low-dimensional space, is increasingly becoming a focus of network research.
Semi-supervised network embedding takes advantage of labeled data, and has shown promising performance.
However, existing semi-supervised methods would get unappealing results in the \textbf{completely-imbalanced} label setting where some classes
have no labeled nodes at all.
To alleviate this, we propose two novel semi-supervised network embedding methods.
The first one is a shallow method named RSDNE.
Specifically, to benefit from the completely-imbalanced labels, \myalg\ guarantees both intra-class similarity and inter-class dissimilarity in an approximate way.
The other method is \mydeepalg\ which is a new class of graph neural networks.
Different from RSDNE, to benefit from the completely-imbalanced labels, \mydeepalg\ explores the class-semantic knowledge.
This enables \mydeepalg\ to handle networks with node features and multi-label setting.
Experimental results on several real-world datasets demonstrate the superiority of the proposed methods. Code is available at \url{https://github.com/zhengwang100/RECT}.

\end{abstract}

\begin{IEEEkeywords}
Network embedding, Graph neural networks, Social network analysis, Data mining.
\end{IEEEkeywords}}

\maketitle

\section{Introduction}\label{sect_intro}
\IEEEPARstart{N}{etwork} analysis~\cite{wang2016causality}~\cite{wang2019ecoqug}~\cite{wang2018parallel}~\cite{li2019scaling} is a hot research topic in various scientific areas like social science, computer science, biology and physics.
Many algorithmic tools for network analysis heavily rely on network representation which is traditionally represented by the adjacency matrix.
However, this straightforward representation not only lacks of  representative power but also suffers from the data sparsity issue~\cite{buhlmann2011statistics}.

Recently, learning dense and low-dimensional vectors as representations for networks has aroused considerable
research interest in network analysis.
It has been shown that the learned representations could benefit many network analysis tasks, such as node classification~\cite{perozzi2014deepwalk}, link prediction~\cite{grover2016node2vec}~\cite{wang2018deepdirect} and network visualization~\cite{tang2016visualizing}.
Commonly, learning network representation is also known as network embedding~\cite{moyano2017learning}.
The learned low-dimensional vectors are called node embeddings (or representations).

\begin{figure}[!t]
\centering
\subfigure[Unsupervised network embedding]{
\label{fig_ne_unsupervised}
    \includegraphics[width=0.5\textwidth]{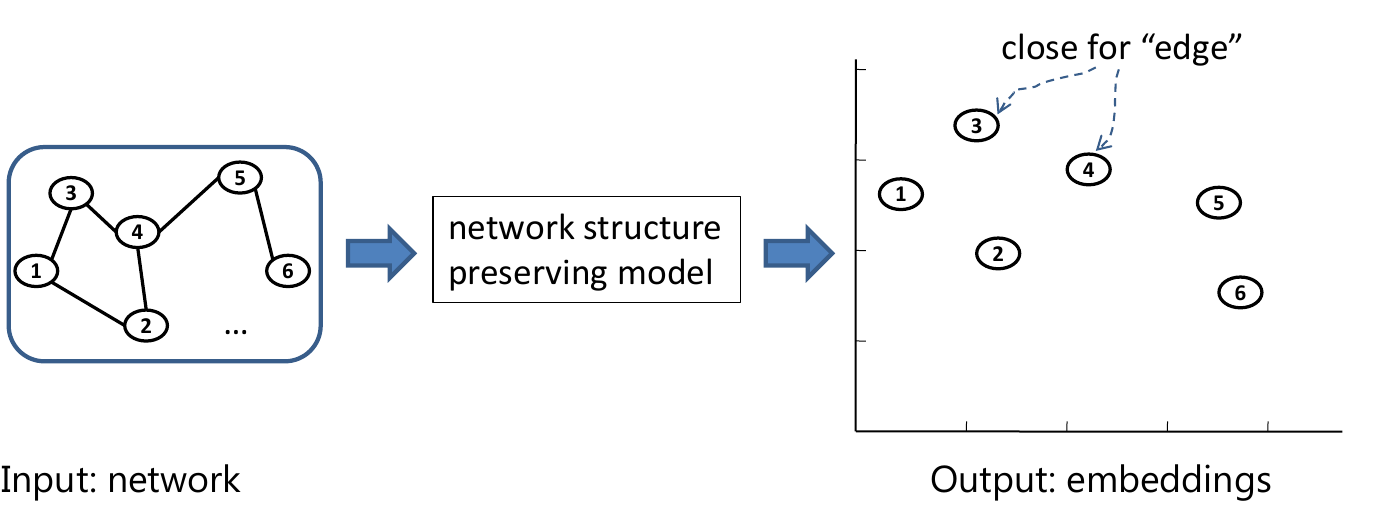}
}
\subfigure[Semi-supervised network embedding]{
\label{fig_ne_supervised}
    \includegraphics[width=0.5\textwidth]{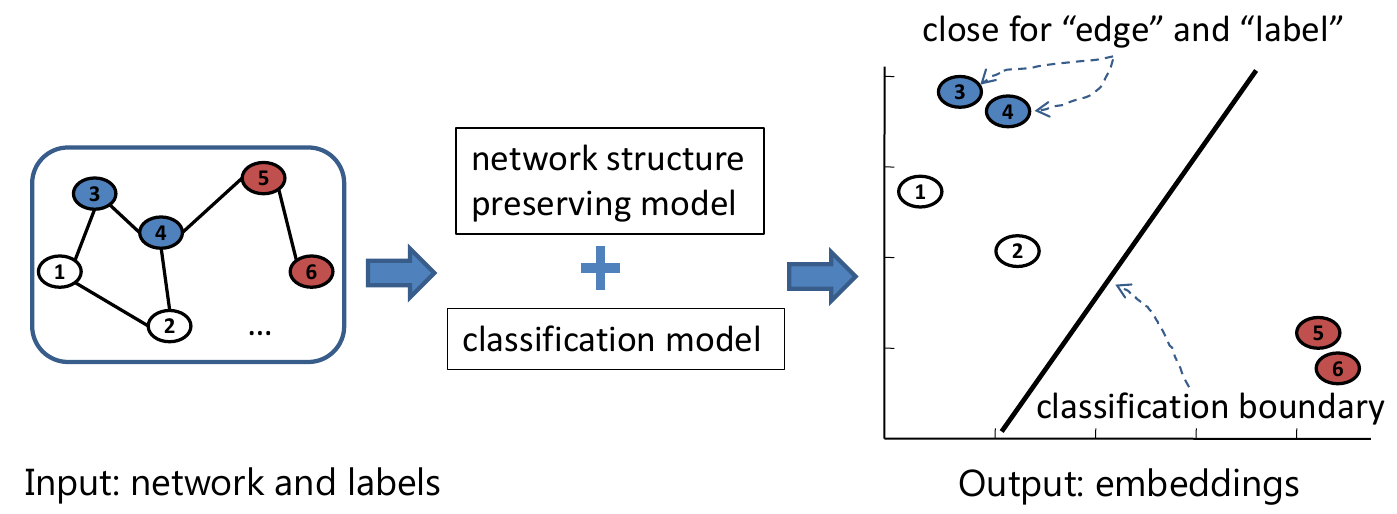}
}
\caption{Frameworks of existing unsupervised and semi-supervised network embedding methods.}
\label{fig_ne}
\end{figure}

One basic requirement of network embedding is to preserve the inherent network structure in the embedding space, as illustrated in Fig.~\ref{fig_ne_unsupervised}.
Early studies, like IsoMap~\cite{tenenbaum2000global} and LLE~\cite{roweis2000nonlinear}, ensure the embedding similarity among linked nodes.
Now, more research activities focus on preserving the unobserved but legitimate links in the network.
For example, DeepWalk~\cite{perozzi2014deepwalk} exploits the node co-occurring relationships in the truncated random walks over a network.
LINE~\cite{tang2015line}~\cite{wang2017equivalence} considers both the first-order and second-order proximities of a network.
Unlike the above two shallow methods, SDNE~\cite{wang2016structural}, a class of graph neural networks (GNNs)~\cite{gori2005new}~\cite{scarselli2008graph}, uses multiple layers of non-linear functions to model these two proximities.

Semi-supervised network embedding methods, which take advantage of labeled data, have shown promising performance.
Typical semi-supervised shallow methods include LSHM~\cite{jacob2014learning}, LDE~\cite{wang2016linked}, and MMDW~\cite{tu2016max}.
Typical semi-supervised GNNs are GCN~\cite{kipf2017semi}, GAT~\cite{velickovic2018graph} and APPNP~\cite{klicpera2019predict}.
As illustrated in Fig.~\ref{fig_ne_supervised}, in these methods, a classification model (e.g., SVM~\cite{hearst1998support} and Cross-entropy~\cite{de2005tutorial}) will be learned to inject label information.
Intuitively, in the embedding space, the learned classification model would reduce the distance between same labeled nodes and enlarge the distance between different labeled nodes.
Influenced by this, the embedding results therefore become more discriminative and have shown state-of-the-art performance.
\begin{figure}[!t]
\centering
    \includegraphics[width=0.5\textwidth]{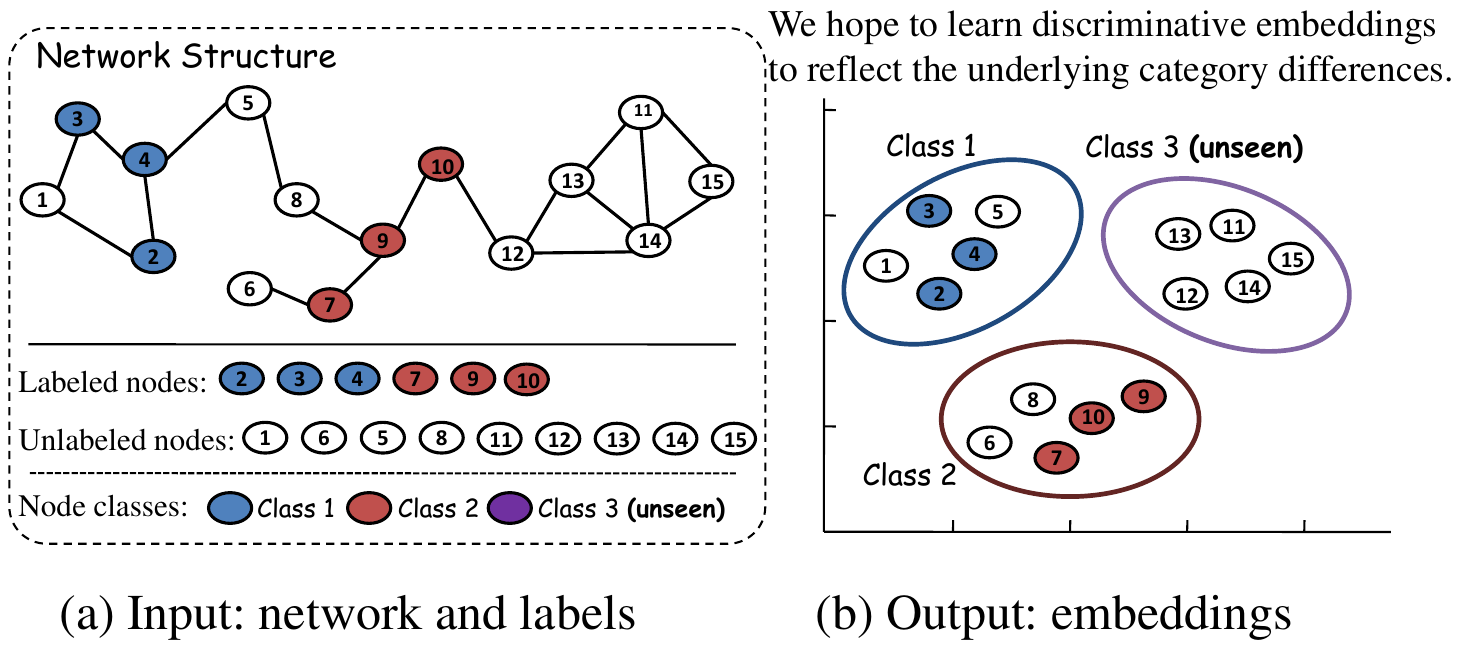}
\caption{
Illustration of the network embedding with completely-imbalanced labels.
This toy network actually contains three classes of nodes, but only two classes provide labeled nodes, i.e., blue and red nodes.
The remaining nodes (including all the nodes of Class 3) are unlabeled.
}
\label{fig_example}
\end{figure}

\subsection{Problem}
Most semi-supervised network embedding methods~\cite{jacob2014learning}~\cite{wang2016linked}~\cite{tu2016max} assume the labeled data is generally balanced, i.e., every class has at least one labeled node.
In this paper, we consider a more challenging scenario in which some classes have no labeled nodes at all (shown in Fig.~\ref{fig_example}), i.e., the \textbf{completely-imbalanced} case.
This problem can be formulated as follows:
\begin{problem}[Network embedding with completely-imbalanced labels]\label{define_problem}
Given a network $\mathcal{G}=(\mathcal{V}, A, \mathcal{C}, \mathcal{C}^{s})$ where $\mathcal{V}$ is the set of $n$ nodes, $A \in \mathbb{R}^{n\times n}$ is the adjacency matrix, $\mathcal{C}$ is the whole node class label set, and $\mathcal{C}^{s} {\subset} \mathcal{C}$ is the observed label set, our goal is to learn a continuous low-dimensional vector $u_{i} \in \mathbb{R}^{d}$ ($d {\ll} n$) for each node $v_i$, such that nodes close to each other in the network structure and with the similar class labels are close in the embedding space.
\end{problem}

This problem deserves special attention for two reasons.
Firstly, it has many practical applications.
For example, considering Wikipedia which can be seen as a set of linked web pages on various topics~\cite{de2017inducing}, it is difficult to collect labeled samples for every topic exactly and not miss any one.
Secondly, and more importantly, without considering this issue, traditional semi-supervised methods would yield unappealing results.
To verify this, we carry out an experiment on Citeseer dataset~\cite{mccallum2000automating}, in which the nodes from unseen classes are excluded from the labeled data.
We test two typical semi-supervised methods (i.e., a shallow method LSHM and a GNN method GCN) on node classification task.
As shown in Table~\ref{tab_micro-f1_intro}, their performance declines noticeably compared with their counterparts trained with the balanced labels.
This decline might be caused by the classification models used in these methods, since general classifiers are very likely to get biased results on imbalanced data~\cite{he2009learning}.
We refer to Sections~\ref{sect_analysis} and~\ref{sect_expriment} for more detailed discussion.

\subsection{Contribution}
To address this problem, in this paper, we first present a novel shallow method termed \myalg.
The basic idea is to guarantee both intra-class similarity and inter-class dissimilarity in an approximate way, so as to benefit from completely-imbalanced labels.
Specifically, we relax the intra-class similarity requirement by allowing the same labeled nodes to lie on the same manifold in the embedding space.
On the other hand, we approximate the inter-class dissimilarity requirement by removing the known connections between the nodes with different labels.
As such, our method can reasonably guarantee these two requirements and also avoid the biased results.
We further formalize these approximations into a unified embedding framework, and give an efficient learning algorithm.

\begin{table}[!t]
\setlength{\tabcolsep}{4.5pt}
\scriptsize
\caption{Classification performance on Citeseer. Here: we use $\mathcal{M}$($b$) and $\mathcal{M}$(-$t$) to denote the method $\mathcal{M}$ using the balanced and completely-imbalanced labeled data with $t$ unseen classes, respectively.}
\centering
\begin{tabular}{cc||lll|lll}
\hline
& &  \multicolumn{3}{c|}{Accuracy} &\multicolumn{3}{c}{Relative Accuracy Decline} \\
\hline
\hline
\multicolumn{2}{c||}{\backslashbox{\emph{\textbf{Method}}}{\emph{\textbf{Label}}}} & 10\% & 30\% & 50\% & 10\% & 30\% & 50\%\\
\hline
\multirow{3}*{LSHM}
&LSHM(b) &0.5007 &0.6178 &0.6711 &- &- &-  \\
&LSHM(-1) &0.4258 &0.5887 &0.6455 &0.1496$\downarrow$	&0.0471$\downarrow$	&0.0382$\downarrow$  \\
&LSHM(-2) &0.4253 &0.5504 &0.6027 &0.1506$\downarrow$	&0.1091$\downarrow$	&0.1019$\downarrow$ \\
\hline
\multirow{3}*{GCN}
&GCN(b) &0.7198 &0.7473 &0.7628 &- &- &- \\
&GCN(-1) &0.6572 &0.6937 &0.7064 &0.0870$\downarrow$	&0.0717$\downarrow$	&0.0739$\downarrow$ \\
&GCN(-2) &0.4761 &0.5085 &0.5159 &0.3386$\downarrow$	&0.3196$\downarrow$	&0.3237$\downarrow$ \\
\hline
\end{tabular}
\label{tab_micro-f1_intro}
\end{table}

To leverage the power of deep neural networks~\cite{lecun2015deep}, we further propose \mydeepalg, a new class of GNNs.
Comparing to \myalg, \mydeepalg\ can further leverage node features and deal with the multi-label case~\cite{herrera2016multilabel}.
In particular, to utilize the completely-imbalanced labels, unlike RSDNE nor traditional GNNs, \mydeepalg\ adopts a novel objective function which explores the class-semantic knowledge.
This is motivated by the recent success of Zero-Shot Learning (ZSL)~\cite{xian2018zero}, which has demonstrated the ability of recognizing unseen objects via introducing class-semantic descriptions.
In addition, unlike the traditional ZSL methods, the class-semantic descriptions used in \mydeepalg\ do not rely on human annotations or any third-party resources, making \mydeepalg\ well suited for practical applications.

In summary, our main contributions are as follows:
\begin{enumerate}
\setlength\itemsep{0em}
  \item We study the problem of network embedding with completely-imbalanced labels. To our best knowledge, little work has
addressed this problem.
  \item We propose an effective shallow method named \myalg\ which can learn discriminative embeddings by approximately guaranteeing both intra-class similarity and inter-class dissimilarity.
  \item We propose \mydeepalg , a new class of graph neural networks. Comparing to \myalg, \mydeepalg\ can further handle networks with node features and multi-label setting.
  \item We conduct extensive experiments on five real-world datasets in both completely-imbalanced setting and balanced setting to demonstrate the superiority of our methods.
\end{enumerate}
In addition, it is worth highlighting that in the balanced label setting, our methods could still achieve comparable performance to state-of-the-art semi-supervised methods, although our methods are not specially designed for this setting.
Therefore, our methods would be favorably demanded by the scenario where the quality of labels cannot be guaranteed.

The remainder of this paper is organized as follows.
We review some related work in Section~\ref{sect_related}.
In Section~\ref{sect_shallow_method}, we elaborate our shallow method RSDNE with details.
In Section~\ref{sect_deep_method}, we introduce the proposed GNN method \mydeepalg.
Section~\ref{sect_analysis} discusses the rationality of our methods, and further analyzes the relationship between the existing methods and ours.
Section~\ref{sect_expriment} reports experimental results.
Section~\ref{section_conclusion} concludes this paper.

\section{Related Work}\label{sect_related}

\subsection{Semi-supervised Network Embedding}
The goal of semi-supervised network embedding is to learn the representations of both labeled and unlabeled nodes.
Existing shallow methods mainly share the similar idea, that is, to jointly train a network structure preserving model and a class classification model.
For example, LDE~\cite{wang2016linked} considers the first-order proximity~\cite{tang2015line} of the network and jointly trains a 1-nearest neighbor classification model~\cite{mucherino2009k}.
Semi-supervised GNNs also will train a classification model but implicitly preserve the network structure information.
In particular, most GNNs (like GCN~\cite{kipf2017semi}, GAT~\cite{velickovic2018graph} and APPNP~\cite{klicpera2019predict}) iteratively perform feature aggregations based on the network structure~\cite{battaglia2018relational}.
We refer readers to a comprehensive survey~\cite{wu2019comprehensive} for more discussions.

However, these methods all assume the labeled data is generally balanced (i.e., label information covers all classes), otherwise would get unappealing results.
In practice, the quality of labeled data is hard to guarantee.
Therefore, to enhance the applicability, we investigate network embedding in the completely-imbalanced label setting.

\subsection{Imbalanced Data Learning}
A training dataset is called imbalanced if at least one of the classes are represented by significantly less number of instances than the others.
The imbalanced data are pervasively existed in multiple domains ranging from the physical world to social networks, and to make proper use of such data is always a pivotal challenge~\cite{Cai2019TiiTaxi}~\cite{Cai2018TNSECPSS}.
This topic has been identified in several vital research areas, such as classification~\cite{sun2007cost}, clustering~\cite{yen2009cluster}, and data streams~\cite{yan2017framework}.
We refer to~\cite{he2009learning} and~\cite{krawczyk2016learning} for a comprehensive survey.
However, in the area of network embedding, little previous work considers the imbalanced problem, not to mention the completely-imbalanced problem~\cite{moyano2017learning}.

\subsection{Zero-Shot Learning} \label{subsect_zsl}
ZSL~\cite{lampert2009learning}~\cite{farhadi2009describing}, which is recently a hot research topic in computer vision, aims to recognize the objects from unseen classes.
To achieve this goal, it leverages some high-level semantic descriptions (also called as attributes) shared between both seen and unseen classes.
For example, we can define some attributes like ``wing'', ``climb'' or ``tail'' for animals.
Then we can train attribute recognizers using images and attribute information from seen classes.
After that, given an image from unseen classes, we can infer its attributes.
By comparing the difference between the inferred attributes and each unseen classes' attributes, the final output is given based on the score.
Generally, attributes are human annotated, which needs lots of human efforts.
Another more practical way is to use word embeddings generated by word2vec tools~\cite{mikolov2013efficient} trained with large-scale general text database.
Despite of this, attributes collection still heavily relies on third-party resources, limiting the use of ZSL methods in practical applications.


Till now, although various ZSL methods have been proposed~\cite{geng2018recent}, all these methods are limited to classification or prediction scenario~\cite{wang2019zero}.
To our best knowledge, there is little reported work considering the unseen classes in the network embedding problem.
This problem can be seen as a new variation of ZSL or, more properly, as the problem of zero-shot graph embedding (ZGE) that aims to learn effective node representations for both seen and unseen classes.

\section{The Proposed Shallow Method: RSDNE}\label{sect_shallow_method}
In this section, we first introduce a network structure preserving model.
Then, we present our method with another two objective terms for completely-imbalanced labels.
Finally, we give an efficient optimization algorithm.

\subsection{Modeling Network Structure with DeepWalk}
To capture the topological structure of a network, DeepWalk performs random walks over a network to get node sequences.
By regarding each node sequence $\omega =\{v_{1}, ..., v_{|\omega|}\}$ as a word sequence, it adopts the well-known language model Skip-Gram~\cite{mikolov2013efficient} to  maximize the likelihood of the surrounding nodes given the current node $v_{i}$ for all random walks $\omega \in \Omega$:
\begin{equation}
\label{eq_deepwalk_original}
\sum_{\omega\in \Omega}[\frac{1}{|\omega|} \sum_{i=1}^{|\omega|} \sum_{ -r \le j \le r} \log Pr(v_{i+j}|v_{i})]
\end{equation}
where $r$ is the radius of the surrounding window, and the probability $Pr(v_{j}|v_{i})$ is obtained via the softmax:
\begin{equation}
\label{eq_softmax}
Pr(v_{j}|v_{i}) = \frac{exp(u_{j} \cdot u_{i})}{\sum_{t\in \mathcal{V}} exp(u_{t} \cdot u_{i}) }
\end{equation}
where $u_{i}$ is the representation vector of node $v_{i}$, and $\cdot$ is the inner product between vectors.

Yang et al.~\cite{yang2015network} has proved that DeepWalk actually factorizes a matrix $M$ whose entry $M_{ij}$ is formalized as:
\begin{equation}
\label{eq_deepwalk}
M_{ij} = \log \left. [e_{i}(\bar{A}+\bar{A}^2+ \cdots + \bar{A}^t)] \middle / t \right.
\end{equation}
where $\bar{A}$ is the transition matrix which can be seen as a row normalized network adjacency matrix, and $e_{i}$ denotes an indicator vector whose $i$-th entry is 1 and the others are all 0.
To balance speed and accuracy, ~\cite{yang2015network} finally factorized the matrix $M {=} (\bar{A}{+}\bar{A}^2)/2$ instead, since sparse matrix multiplication can be easily parallelized and efficiently calculated~\cite{polok2014fast}.

More formally, the matrix factorization model of DeepWalk aims to find a (node embedding) matrix $U \in \mathbb{R}^{n \times d}$ and a (context embedding) matrix $H \in \mathbb{R}^{d \times n}$ via solving the following optimization problem:
\begin{equation}
\label{eq_mf_basic}
\begin{aligned}
\min_{U,H}  &\hspace{0.5em} \mathcal{J}_{DW} {=} \left \| M -UH \right \|^{2}_{F} + \lambda ( \left \| U \right \|^{2}_{F} + \left \| H \right \|^{2}_{F} )
\end{aligned}
\end{equation}
where $\lambda$ is the regularization parameter to avoid overfitting.
In this paper, we adopt this model (i.e., Eq.~\ref{eq_mf_basic}) as our basic network structure preserving model.

\subsection{Modeling Intra-class Similarity}
In this completely-imbalanced setting, the labeled nodes all come from the seen classes.
Intuitively, we should ensure the \emph{intra-class similarity}, i.e., the nodes sharing the same label should be close to each other in the embedding space.
To satisfy this, traditional semi-supervised methods employ various classifiers to reduce the intra-class embedding variance.
However, this would yield unappealing results with completely-imbalanced labels (shown in Table~\ref{tab_micro-f1_intro}).

To alleviate this, we relax this similarity requirement by allowing the same labeled nodes to lie on the same manifold, i.e., a topological space which can be Euclidean only locally~\cite{roweis2000nonlinear}.
Although the underlying manifold is unknown, we can build a sparse adjacency graph to approximate it~\cite{belkin2007convergence}.
In other words, each labeled node only needs to be close to $k$ ($k {\ll} n$, and $k{=}$5 in our experiments) same labeled nodes.
However, we do not know how to select the best $k$ nodes, since the optimal node alignments in the new embedding space is unknown.
A simple solution is to randomly select $k$ same labeled nodes, which may not be optimal.

In this paper, we solve this problem in an adaptive way.
For notational convenience, for a labeled node $v_i$, we call the selected $k$ nodes as $v_i$'s \emph{intra-class neighbors}.
Suppose we use $S {\in} \{0,1\}^{n\times n}$ to denote the intra-class neighbor relationship among nodes, i.e., $S_{ij}{=}1$ when node $v_j$ is the intra-class neighbor of node $v_i$, otherwise $S_{ij}{=}0$.
Mathematically, $S$ can be obtained by solving the following optimization problem:
\begin{equation}
\label{eq_intra_cost}
\begin{aligned}
\min_{U,S}  &\hspace{0.5em} \mathcal{J}_{intra} {=} \frac{1}{2} \sum_{ i,j=1}^{n} \left \| u_{i} - u_{j} \right \|^{2}_{F} S_{ij} \\
\mathrm{s.t.}   &\hspace{0.5em} \forall i \in \mathcal{L}, s_{i}'\textbf{1} = k, \ S_{ii}=0 \\
           &\hspace{0.5em}  \forall i,j\in \mathcal{L}, S_{ij} \in \{0,1\}, \ \mathrm{if} \ \mathcal{C}^{s}_{i} = \mathcal{C}^{s}_{j} \\
            &\hspace{0.5em} \forall i,j, S_{ij}=0, \ \mathrm{if} \ i \notin \mathcal{L} \ \mathrm{or}\ \mathcal{C}^{s}_{i} \ne \mathcal{C}^{s}_{j} \\
\end{aligned}
\end{equation}
where $\mathcal{L}$ is the labeled node set, and $s_{i} {\in} R^{n \times 1}$ is a vector with the $j$-th element as $S_{ij}$ (i.e., $s_{i}'$, the transpose of $s_{i}$, is the row vector of matrix $S$), and \textbf{1} denotes a column vector with all entries equal to one,
and $\mathcal{C}^{s}_{i}$ and $\mathcal{C}^{s}_{j}$ are the (seen) class labels of node $v_i$ and $v_j$ respectively.
In this paper, $(\cdot)'$ stands for the transpose.

\subsection{Modeling Inter-class Dissimilarity}
Although Eq.~\ref{eq_intra_cost} models the similarity within the same class, it neglects the \emph{inter-class dissimilarity}, i.e., the nodes with different labels should be far away from each other in the embedding space.
Traditional semi-supervised methods employ different classification models to enlarge the inter-class embedding variance.
Nevertheless, this would yield unappealing results with completely-imbalanced labels (shown in Table~\ref{tab_micro-f1_intro}).

To alleviate this, we approximate this dissimilarity requirement by removing the known connections between the nodes with different labels.
Since we adopt the matrix form of DeepWalk (i.e., matrix $M$ in Eq.~\ref{eq_mf_basic}) to model the connections among nodes, this approximation leads to the following optimization problem:
\begin{equation}
\label{eq_inter_cost}
\begin{aligned}
\min_{U}  &\hspace{0.5em} \mathcal{J}_{inter} {=} \frac{1}{2} \sum_{ i,j = 1}^{n} \left \| u_{i} - u_{j} \right \|^{2}_{F} W_{ij}
\end{aligned}
\end{equation}
where $W$ is a weighted matrix whose element $W_{ij}{=}0$ when labeled nodes $v_i$ and $v_j$ belong to different categories, otherwise $W_{ij}=M_{ij}$.

\subsection{The Unified Model: \myalg}
With modeling the network structure (Eq.~\ref{eq_mf_basic}), intra-class similarity (Eq.~\ref{eq_intra_cost}) and inter-class dissimilarity (Eq.~\ref{eq_inter_cost}), the proposed method is to solve the following optimization problem:
\begin{equation}
\label{eq_final_cost}
\begin{aligned}
\min_{U,H,S}  &\hspace{0.5em} \mathcal{J} {=} \mathcal{J}_{DW}  + \alpha( \mathcal{J}_{intra} + \mathcal{J}_{inter}) \\
\mathrm{s.t.}   &\hspace{0.5em} \forall i \in \mathcal{L}, s_{i}'\textbf{1} = k, \ S_{ii}=0 \\
           &\hspace{0.5em}  \forall i,j\in \mathcal{L}, S_{ij} \in \{0,1\}, \ \mathrm{if} \ \mathcal{C}^{s}_{i} = \mathcal{C}^{s}_{j} \\
            &\hspace{0.5em} \forall i,j, S_{ij}=0, \ \mathrm{if} \ i \notin \mathcal{L} \ \mathrm{or}\ \mathcal{C}^{s}_{i} \ne \mathcal{C}^{s}_{j}
\end{aligned}
\end{equation}
where $\alpha$ is a balancing parameter.
Since both the relaxed similarity and dissimilarity requirements of labels have been considered, we call the proposed method as \textbf{R}elaxed \textbf{S}imilarity and \textbf{D}issimilarity \textbf{N}etwork \textbf{E}mbedding (\myalg).

\mysubsection*{A Light Version of \myalg}
For each labeled node $v_i$, to identify its optimal $k$ intra-class neighbors, \myalg\ needs to consider all the nodes which have the same label with $v_i$.
This would become inefficient when more labeled data is available (some theoretical analysis can be found in Section~\ref{sect_time_complexity}).
Therefore, we give a light version of \myalg\ (denoted as \myalg$^{*}$).
The idea is that: for a labeled node $v_i$, at the beginning, we can randomly select $\bar{k}$ ($k{<}\bar{k} {\ll} n$) same labeled nodes to gather $v_i$'s intra-class neighbor candidate set $\mathcal{O}_{i}$.
Based on this idea, this light version \myalg$^{*}$\ is to solve the following optimization problem:
\begin{equation}
\label{eq_final_cost_light}
\begin{aligned}
\min_{U,H,S}  &\hspace{0.5em} \mathcal{J} {=} \mathcal{J}_{DW}  + \alpha( \mathcal{J}_{intra} + \mathcal{J}_{inter}) \\
\mathrm{s.t.}   &\hspace{0.5em} \forall i \in \mathcal{L}, s_{i}'\textbf{1} = k, \ S_{ii}=0 \\
           &\hspace{0.5em}  \forall i\in \mathcal{L}, j\in \mathcal{O}_{i}, S_{ij} \in \{0,1\} \\
            &\hspace{0.5em} \forall i,j, S_{ij}=0, \ \mathrm{if} \ i \notin \mathcal{L} \ \mathrm{or}\ \mathcal{C}^{s}_{i} \ne \mathcal{C}^{s}_{j} \\
\end{aligned}
\end{equation}

\subsection{Optimization}
\subsubsection{Optimization for \myalg}
The objective function in Eq.~\ref{eq_final_cost} is a standard quadratic programming problem with 0/1 constraints, which might be difficult to solve by the conventional optimization tools.
In this study, we propose an efficient alternative optimization strategy for this problem.

\mysubsection*{Update $U$ As Given $H$ and $S$}
When $S$ is fixed, the objective function in Eq.~\ref{eq_intra_cost} can be rewritten as $Tr(U'L_{s}U)$, where $L_{s} = D_{s} - (S+S')/2 $ and $D_{s}$ is a diagonal matrix whose $i$-th diagonal element is $\sum_{j}(S_{ij} + S_{ji})/2$.
Similarly, the objective function in Eq.~\ref{eq_inter_cost} can be rewritten as $Tr(U'L_{w}U)$ where $L_{w} = D_{w} - (W+W')/2 $ and $D_{w}$ is a diagonal matrix whose $i$-th diagonal element is $\sum_{j}(W_{ij} + W_{ji})/2$.
As such, when $H$ and $S$ are fixed, problem (\ref{eq_final_cost}) becomes:
\begin{equation}
\label{eq_final_cost_U}
\begin{aligned}
\min_{U}  &\hspace{0.2em} \mathcal{J}_{U} {=} \left \| M {-} UH \right \|^{2}_{F} {+} \alpha ( Tr(U'L_{s}U) {+} Tr(U'L_{w}U) )  {+} \lambda \left \| U \right \|^{2}_{F}
\end{aligned}
\end{equation}
\noindent The derivative of $\mathcal{J}_{U}$ w.r.t. $U$ is:
\begin{equation}
\label{eq_dev_U}
\begin{aligned}
\frac{\partial \mathcal{J}_{U}}{\partial U} = 2(-MH'+UHH' {+} \alpha (L_{s} {+} L_{w})U {+} \lambda U )
\end{aligned}
\end{equation}

\mysubsection*{Update $H$ As Given $U$ and $S$}
When $U$ and $S$ are fixed, problem (\ref{eq_final_cost}) becomes:
\begin{equation}
\label{eq_final_cost_V}
\begin{aligned}
\min_{H}  &\hspace{0.5em} \mathcal{J}_{H} {=}\left \| M -UH \right \|^{2}_{F} + \lambda  \left \| H \right \|^{2}_{F}
\end{aligned}
\end{equation}

\noindent The derivative of $\mathcal{J}_{H}$ w.r.t. $H$ is:
\begin{equation}
\label{eq_dev_V}
\begin{aligned}
\frac{\partial \mathcal{J}_{H}}{\partial H} &=2(-U'M + U'UH + \lambda H )
\end{aligned}
\end{equation}

\mysubsection*{Update $S$ As Given $U$ and $H$}
When $U$ and $H$ are fixed, problem (\ref{eq_final_cost}) becomes:
\begin{equation}
\label{eq_final_cost_S}
\begin{aligned}
\min_{S}  &\hspace{0.5em} \mathcal{J}_{S} = \frac{\alpha}{2} \sum_{ i,j=1}^{n} \left \| u_{i} - u_{j} \right \|^{2}_{F} S_{ij} \\
\mathrm{s.t.}   &\hspace{0.5em} \forall i \in \mathcal{L}, s_{i}'\textbf{1} = k, \ S_{ii}=0 \\
           &\hspace{0.5em}  \forall i,j\in \mathcal{L}, S_{ij} \in \{0,1\}, \ \mathrm{if} \ \mathcal{C}^{s}_{i} = \mathcal{C}^{s}_{j} \\
            &\hspace{0.5em} \forall i,j, S_{ij}=0, \ \mathrm{if} \ i \notin \mathcal{L} \ \mathrm{or}\ \mathcal{C}^{s}_{i} \ne \mathcal{C}^{s}_{j}
\end{aligned}
\end{equation}
\noindent As problem (\ref{eq_final_cost_S}) is independent between different $i$, we can deal with the following problem individually for each labeled node $v_i$\footnote{For an unlabeled node $v_i$, the solution is $s'_{i}=0$.}:
\begin{equation}
\label{eq_sub_problem_S}
\begin{aligned}
\min_{ s_{i}, i \in \mathcal{L}}  &\hspace{0.5em} \sum_{j=1}^{n} \left \| u_{i} - u_{j} \right \|^{2}_{F} S_{ij} \\
\mathrm{s.t.}   &\hspace{0.5em} s_{i}'\textbf{1} = k,\ S_{ii}=0   \\
            &\hspace{0.5em} \forall j, S_{ij}=0, \ \mathrm{if} \ j \notin \mathcal{L} \\
            &\hspace{0.5em}  \forall j\in \mathcal{L}, S_{ij} \in \{0,1\}, \ \mathrm{if} \ \mathcal{C}^{s}_{i} = \mathcal{C}^{s}_{j}
\end{aligned}
\end{equation}
\noindent The optimal solution to problem (\ref{eq_sub_problem_S}) is (proved in Section~\ref{sect_proof}):
\begin{align}
\label{eq_solution_S}
 S_{ij} &=
  \begin{cases}
   1,        & \text{if } v_j \in \mathcal{N}_{kc}(v_i){;} \\
   0,        & \text{otherwise.}
  \end{cases}
\end{align}
where set $\mathcal{N}_{kc}(v_i)$ contains the top-$k$ nearest and same labeled nodes to $v_i$ in the current calculated embedding space.

For clarity, we summarize the complete \myalg\ algorithm for network embedding in Alg.~\ref{alg}.

\begin{algorithm}[!tb]
\caption{\myalg}
\label{alg}
\begin{algorithmic}[1]
\Require Matrix form of DeepWalk $M$, label information, learning rate $\eta$, and parameters $\alpha$ and $\lambda$ ;
\Ensure The learned network node embedding result $U$;
    \State Initialize $U$, $H$ and $S$;
    \Repeat
        \State Update $U$ by $U = U - \eta \frac{\mathcal{J}_{U}}{\partial U} $; 
        \State Update $H$ by $H = H - \eta \frac{\mathcal{J}_{H}}{\partial H} $; 
        \State Update $S$ by solving problem (\ref{eq_final_cost_S}) ;
        \State Change the learning rate $\eta$ according to some rules, such as Armijo~\cite{bertsekas1997nonlinear};
    \Until{Convergence or a certain number of iterations;} \\
    \Return $U$.
\end{algorithmic}
\end{algorithm}

\subsubsection{Optimization for \myalg$^{*}$}
The optimization approach for \myalg$^{*}$ is almost the same as Alg.~\ref{alg}.
The only difference is that: when updating $S$ as given $U$ and $H$, for each labeled node $v_i$, we only need to sort the nodes in (it's intra-class neighbor candidate set) $\mathcal{O}_{i}$ to get the top-$k$ nearest and same labeled neighbors, so as to get the optimal solution of $S$.

\section{The Proposed GNN Method: \mydeepalg}\label{sect_deep_method}
It is inappropriate to directly adopt the objective function of RSDNE (Eq.~\ref{eq_final_cost} or Eq.~\ref{eq_final_cost_light}) into traditional neural networks (like multilayer perceptron) which are not suited for graph-structured\footnote{In the rest of paper, we use the term ``graph'' to refer to the linked data structures such as social or biological networks, so as to avoid ambiguity with neural network terminology.} data.
Moreover, simultaneously optimizing multiple objective terms is a challenging engineering task, and usually results in a degenerate solution~\cite{glorot2010understanding}.
In this section, we first give a brief introduction to GNN, and then propose a novel effective and easy-to-implement GNN method.

\subsection{Preliminaries: Graph Neural Network}
GNN~\cite{scarselli2008graph} is a type of neural network model for graph-structured data.
Generally, GNN models are dynamic models where the hidden representations of all nodes evolve over layers.
Given a graph with the adjacent matrix $A$, at the $t$-th hidden layer, the representation $z_{v_i}^{t}$ for node $v_i$ is commonly updated as follows:
\begin{equation}
\label{eq_gnn}
\begin{aligned}
b_{v_i}^{t} & = \mathcal{F}_{b}(\{ z_{v_{j}}^{t}|v_{j} \in \Psi_{v_i}\}) \\
z_{v_i}^{t+1} & = \mathcal{F}_{z}(\{b_{v_{i}}^{t}, z_{v_{i}}^{t}\})
\end{aligned}
\end{equation}
where $b_{v_i}^{t}$ is a vector indicating the aggregation of messages that node $v_i$ receives from its neighbors $\Psi_{v_i}$.
Function $\mathcal{F}_{b}$ is a message calculating function, and $\mathcal{F}_{z}$ is a hidden state update function.
Similar to the common neural networks, $\mathcal{F}_{b}$ and $\mathcal{F}_{z}$ are feed-forward neural layers.
By specifying these two functional layers, we can get various GNN variants, like Graph convolutional network (GCN)~\cite{kipf2017semi} and Graph attention network (GAT)~\cite{velickovic2018graph}.
\begin{figure}[]
\centering
    \includegraphics[width=0.5\textwidth]{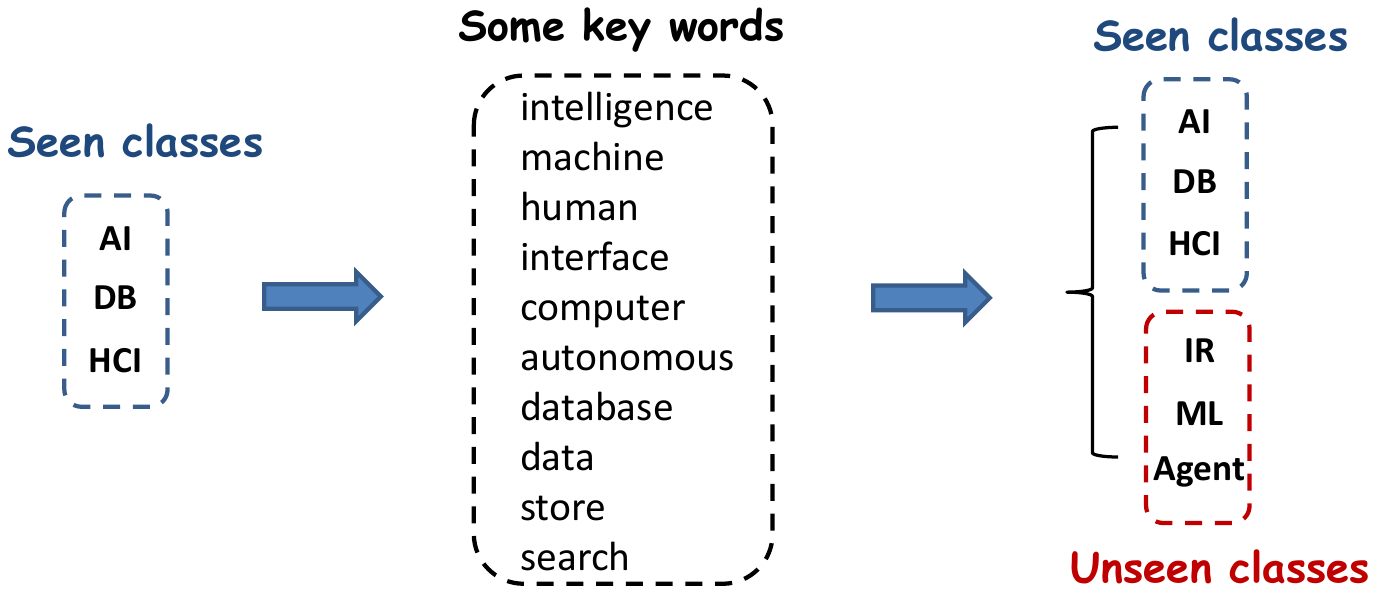}
\caption{Some words sampled from the documents of three seen classes (i.e., AI, DB, and HCI) in Citeseer.}
\label{fig_semantic}
\end{figure}

\begin{figure*}[ht]
\centering
    \includegraphics[width=0.75\textwidth]{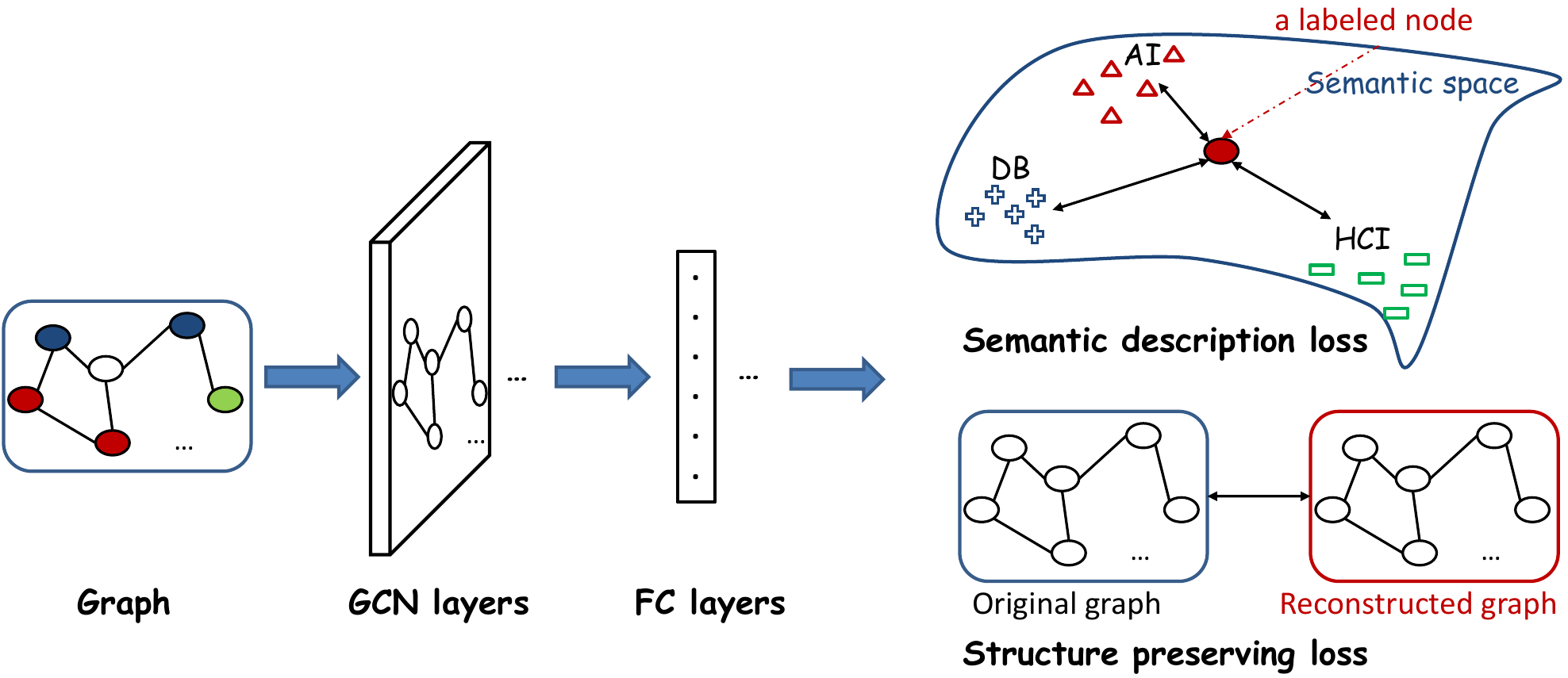}
\caption{Architecture overview of \mydeepalg.}
\label{fig_dear_overview}
\end{figure*}

To inject label information, GNNs usually end up with a softmax layer to train a classification model.
Once the training of GNNs is completed, the outputs of any hidden layers can be adopt as the final graph embedding results.
However, as shown in Table~\ref{tab_micro-f1_intro}, this kind of methods will yield unappealing results in completely-imbalanced label setting.
The fundamental cause is that the known supervised information only reflects the knowledge of seen classes but ignores that of unseen classes.
Therefore, in the completely-imbalanced setting, the key issue is: how to deduce the supervised information, which contains both the knowledge of seen and unseen classes, from the limited labeled nodes of seen classes.

\subsection{Deduce Supervised Information for Unseen Classes}
\subsubsection{Observation}
The recent success of ZSL demonstrates that the capacity of inferring semantic descriptions (also known as attributes) makes it possible to categorize unseen objects.
Generally, the attributes are human annotated or provided by third-party resources (like the word embeddings learned from large-scale general text database), limiting the use of ZSL methods.
In addition, the quality of attributes can be a source of problems in practical applications.

For graph embedding, we propose to obtain class-semantic descriptions in a more practical manner.
To show its feasibility, we continue to use the citation graph Citeseer~\cite{mccallum2000automating} as an example.
Figure~\ref{fig_semantic} shows some words sampled from the documents of AI, DB, and HCI classes in this dataset.
Interestingly, these words also reflect some knowledge of other three (unseen) research areas (i.e., IR, ML and Agent).
For example, IR's key words (like ``human'', ``search'' and ``data'') also show up in the documents of the (seen) research areas DB and HCI.
This observation inspires us to generate class-semantic descriptions directly from the original node features.

\subsubsection{Generate Class-semantic Descriptions Automatically}
Let matrix $X \in \mathbb{R}^{n\times m}$ denote the feature matrix, where $x_{i}\in \mathbb{R}^m$ (the $i$-th row of $X$) is the corresponding $m$-dimensional feature vector of node $v_{i}$.
To obtain the semantic descriptions for a seen class $c$, we can leverage a readout function $\mathcal{R}$, and use it to summarize a class-semantic description vector (denoted as $\hat{y}_{c}$) from the labeled nodes, i.e., $\hat{y}_{c} = \mathcal{R} (\{ x_{i}|\forall_{i}\ \mathcal{C}^{s}_{i}=c \})$.

For those graphs without node features, we can treat the rows of adjacency matrix as node features.
Intuitively, each node can be seen as a word, and all nodes construct a dictionary.

\subsection{The Proposed Model: \mydeepalg}
The architecture of \mydeepalg\ is illustrated in Fig.~\ref{fig_dear_overview}.
In detail, we first adopt GCN layers to explore graph structure information.
After propagating through all CGN layers, fully-connected (FC) layers are used to project the outputs of GCN layers into a semantic vector space, in which the loss is computed.
Here, we use FC layers rather than GCN layers, because we hope to improve the robustness of the learned embeddings by satisfying our objective function without explicitly using the graph structure knowledge.

Our loss function consists of two parts.
The first one is a prediction loss in the semantic space, i.e., the loss between the predicted and the actual class-semantic description vectors:
\begin{equation}
\label{eq_semantic_loss}
\mathcal{J}_{semantic} = \sum_{i\in \mathcal{L}} loss(\hat{y'}_{C_{i}^{s}}, \hat{y}_{C_{i}^{s}})
\end{equation}
where $\hat{y'}_{C_{i}^{s}}$ and $\hat{y}_{C_{i}^{s}}$ are the predicted and the actual class-semantic vector of the labeled node $v_i$ respectively, and $loss(\cdot, \cdot)$ is a sample-wise loss function.
By using this loss, our method can capture the class-semantic knowledge, making the learned graph embeddings reflect the supervised information of both seen and unseen classes.

The second is a graph structure preserving loss.
Unlike GCN or other semi-supervised GNNs, we still propose to explicitly preserve the graph structure knowledge.
This is because the above loss (Eq.~\ref{eq_semantic_loss}) actually indirectly preserves the label discrimination, which would reduce the discrimination of learned embeddings (especially in the seen classes).
For simplicity, here we follow the similar idea of our shallow method RSDNE.
Specifically, the learned node embeddings $U$ (i.e., the outputs of the last layer) should minimize:
\begin{equation}
\label{eq_deep_graph_loss}
\mathcal{J}_{graph\_neural} = loss(M, UU')
\end{equation}

To learn powerful embeddings by considering both parts, a simple and effective way we find in practice is to train the model which considers
these two parts separately and then concatenate the embeddings trained by the two parts for each node.
A more principled way to combine these two loss parts is to jointly train the objective functions Eq.~\ref{eq_semantic_loss} and Eq.~\ref{eq_deep_graph_loss}, which we leave as future work.

For clarity, we summarize this method in Alg.~\ref{alg_deep}.
We refer this method as \textbf{RE}laxed G\textbf{C}N Ne\textbf{T}work (RECT), as it utilizes GCN model and relaxes the original label discrimination by preserving class-semantic knowledge.

\begin{algorithm}[!tb]
\caption{\mydeepalg}
\label{alg_deep}
\begin{algorithmic}[1]
\Require Graph information (i.e., $A$ and $X$), label information $\mathcal{L}$;
\Ensure The learned node embedding result $U$;
    \State Summarize the class-semantic descriptions of seen classes through the readout function $\mathcal{R}$;
    \State Obtain the embedding result $U^{(1)}$ by optimizing \mydeepalg\ with the objective function Eq.~\ref{eq_semantic_loss} ;
    \State Obtain the embedding result $U^{(2)}$ by optimizing \mydeepalg\ with the objective function Eq.~\ref{eq_deep_graph_loss} ;
    \State Obtain the final embedding result $U$ by concatenating the normalized $U^{(1)}$ and $U^{(2)}$ ; \\
    \Return $U$.
\end{algorithmic}
\end{algorithm}

\section{Algorithm Analysis}\label{sect_analysis}
\subsection{Optimization Algorithm Solving Problem (\ref{eq_sub_problem_S})}\label{sect_proof}
\begin{theorem} \label{pro_alg}
The optimal solution of problem (\ref{eq_sub_problem_S}) is Eq.~\ref{eq_solution_S}.
\end{theorem}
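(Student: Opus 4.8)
The plan is to observe that problem (\ref{eq_sub_problem_S}), for each fixed labeled node $v_i$, is a linear integer program over an extremely simple feasible set, and then to solve it by a direct greedy / exchange argument. First I would rewrite the constraints compactly. Define the admissible index set $\mathcal{A}_i = \{ j \in \mathcal{L} : j \neq i,\ \mathcal{C}^{s}_{j} = \mathcal{C}^{s}_{i} \}$. The last two constraint lines force $S_{ij}=0$ for every $j \notin \mathcal{A}_i$ (because such a $j$ is either unlabeled, equal to $i$, or carries a different seen label), while for $j \in \mathcal{A}_i$ we have $S_{ij}\in\{0,1\}$. Combined with $s_i'\textbf{1}=k$, the problem is exactly: choose a size-$k$ subset $T\subseteq\mathcal{A}_i$ (the indices with $S_{ij}=1$) minimizing $\sum_{j\in T} d_{ij}$, where $d_{ij}:=\|u_i-u_j\|_F^2\geq 0$.

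The key step is then immediate: because the objective is additive over $j$ with nonnegative weights $d_{ij}$, the minimum of $\sum_{j\in T} d_{ij}$ over all $k$-subsets $T\subseteq\mathcal{A}_i$ is attained by the $k$ indices $j\in\mathcal{A}_i$ with smallest $d_{ij}$, i.e.\ exactly the set $\mathcal{N}_{kc}(v_i)$ of top-$k$ nearest, same-labeled nodes to $v_i$ in the current embedding. I would justify this by a one-line exchange argument: if an optimal $T$ contained some $j^{\star}\notin\mathcal{N}_{kc}(v_i)$ while omitting some $j'\in\mathcal{N}_{kc}(v_i)$, then $d_{ij'}\leq d_{ij^{\star}}$ by definition of $\mathcal{N}_{kc}$, so swapping $j^{\star}$ for $j'$ keeps $T$ feasible and does not increase the objective; iterating converts any optimal $T$ into $\mathcal{N}_{kc}(v_i)$ at no extra cost, hence $\mathcal{N}_{kc}(v_i)$ is optimal. (Equivalently, one may note that the relaxation $S_{ij}\in[0,1]$ has a single equality constraint with an all-ones row, a trivially totally unimodular matrix, so an integral optimal vertex exists and the greedy pick attains the LP optimum.) Applying this independently to every labeled $v_i$, and using $s_i'=0$ as the only feasible choice for unlabeled $v_i$, yields precisely Eq.~(\ref{eq_solution_S}).

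The only points needing a little care — and the closest thing to an obstacle — are boundary and degeneracy issues rather than anything deep. One must assume each seen class contains at least $k+1$ labeled nodes so that $|\mathcal{A}_i|\geq k$ and the constraint $s_i'\textbf{1}=k$ is feasible; this is implicit in the standing assumption $k\ll n$ and should be stated. And when several candidates are equidistant from $v_i$, the set $\mathcal{N}_{kc}(v_i)$ is not unique, but the exchange argument shows that every valid top-$k$ choice achieves the same optimal value, so the theorem holds under any consistent tie-breaking rule. Beyond these remarks the argument is routine.
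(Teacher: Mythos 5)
Your proposal is correct and follows essentially the same route as the paper: the paper also establishes optimality by an exchange/contradiction argument, swapping a non-top-$k$ member $v_p$ of a purported optimal set for an omitted top-$k$ node $v_q$. Your version is marginally more careful about boundary issues (feasibility of $s_{i}'\textbf{1}=k$ and ties in distances, where the paper's strict inequality would not literally hold), but the core idea is identical.
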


\begin{proof}\label{proof_alg}
By contradiction, suppose a labeled node $v_i$ has gotten its optimal intra-class neighbor set $\mathcal{N}_{kc}$ which contains a node $v_p$ not in $v_i$'s top-$k$ nearest and same labeled nodes.
As such, there must exist a node $v_q \notin \mathcal{N}_{kc}$ which is one of $v_i$'s top-$k$ nearest and same labeled nodes.
Then, we get $ \| u_{i} - u_{p} \|_{F}^{2} > \| u_{i} - u_{q} \|_{F}^{2}$.
Considering our minimization problem (i.e., Eq.~\ref{eq_sub_problem_S}), this inequation leads:
\begin{equation}
\label{eq_proof}
\begin{aligned}
\sum_{j\in \mathcal{N}_{kc}} \left \| u_{i} - u_{j} \right \|^{2}_{F} > \sum_{j\in \{\mathcal{N}_{kc} + v_q\} \setminus v_p } \left \| u_{i} - u_{j} \right \|^{2}_{F}
\end{aligned}
\end{equation}
This indicates that $\{\mathcal{N}_{kc} {+} v_q\}{\setminus} v_p$ is a better optimal solution than $\mathcal{N}_{kc}$, a contradiction.
\end{proof}

\subsection{Time Complexity Analysis}\label{sect_time_complexity}
\subsubsection*{Complexity of RSDNE}
Following~\cite{rao2015collaborative}, the time complexity of Alg.~\ref{alg} is as below.
The complexity for updating $U$ is $O(nnz(M)d + d^{2}n + nnz(L)d)$, where $nnz(\cdot)$ is the number of non-zeros of a matrix.
The complexity for updating $H$ is $O(nnz(M)d + d^{2}n)$.
The complexity for updating $S$ is  $O( |\mathcal{C}^{s}|\ell^{2} \log \ell)$, where $\ell = rn |\mathcal{C}^{s}|/|\mathcal{C}|$ is the average number of labeled nodes per class, and $r$ is the label rate.
As $\ell $ is linear with $n$ and $nnz(L)$ is linear with $nnz(M)$, the overall complexity of \myalg\ is $O( \tau ( nnz(M)d {+} n^{2} \log n) ) $, where $\tau$ is the number of iterations to converge.

\mysubsection*{Complexity of \myalg$^{*}$}
For the light version, i.e., \myalg$^{*}$, the complexity of updating $S$ becomes $O( |\mathcal{C}^{s}|\bar{k}^{2} \log \bar{k})$, and all others remain the same.
Hence, as $\bar{k} {\ll} n$, the overall complexity becomes  $O( \tau ( nnz(M)d {+} d^{2}n ) ) $.
As our method typically converges fast ($\tau \leq$ 15 in our experiments) and $d \ll n$, the complexity of \myalg$^{*}$ is linear to $nnz(M)$ and node number $n$.

\mysubsection*{Complexity of RECT}
First of all, the time cost of the GCN layer is linear in the number of graph edges~\cite{kipf2017semi}.
Specifically, the time complexity is $O( m|\mathcal{E}||d^{h}||\mathcal{C}^{stc}|)$, where $|\mathcal{E}|$ is the edge number and $|d^{h}|$ is the hidden layer dimension size and $|\mathcal{C}^{stc}|$ is the dimension of class-semantic description.
The complexity of calculating Eq.~\ref{eq_semantic_loss} is $O( n|\mathcal{C}^{stc}|)$.
The complexity of calculating Eq.~\ref{eq_deep_graph_loss} is $O( dn^2)$.
Therefore, the total complexity of RECT is $O( m|\mathcal{E}||d^{h}||\mathcal{C}^{stc}|+n|\mathcal{C}^{stc}|+dn^2)$.
Note we can directly reduce this complexity by adopting other graph structure preserving objectives, like the objective of DeepWalk (i.e., Eq.~\ref{eq_deepwalk_original}).
Then, the total complexity will reduce to $O( m|\mathcal{E}||d^{h}||\mathcal{C}^{stc}|+n|\mathcal{C}^{stc}|+ d n \log n)$, indicating the similar complexity as DeepWalk and GCN.

\subsection{The Proposed Methods\ v.s. Traditional Semi-supervised Methods}\label{sect_compare}
\subsubsection{Traditional Semi-supervised Methods}
To benefit from the discriminative information (e.g., class labels), the most effective and widely used strategy is to guarantee both the intra-class similarity and inter-class dissimilarity in the embedding space~\cite{lin2006inter,kan2016multi}.
For this purpose, traditional semi-supervised graph embedding methods reduce the intra-class embedding variance and enlarge the inter-class embedding variance by optimizing various classification models.
However, as the unseen class nodes are (partly) linked with the seen class ones (i.e., seen and unseen class nodes are correlated), only optimizing over the seen classes is suboptimal for the whole graph.

In those shallow methods (like LSHM), this suboptimal strategy would impose lots of strict constraints (like the ``close-to'' constraints between same labeled nodes) only on seen classes, which may seriously mislead the jointly trained graph structure preserving model and finally lead to very poor results.
Similarly in those GNNs which implicitly preserve the graph structure, this suboptimal strategy would also mislead the used message aggregation mechanism and finally lead to very poor results.

\begin{table*}[!t]
\caption{The Statistics of Datasets.}
\centering
\begin{tabular}{lrrrrr}
\hline
Name     &Citeseer  & Cora     &Wiki & PPI &Blogcatalog \\
\hline
Type & Citation graph &Citation graph     &Hyperlink graph &Biological graph &Social graph \\
Nodes        &3,312  & 2,708        &2,405   &3,890   &10,312    \\
Edges        &4,732  & 5,429      &17,981    &76,584 &333,983     \\
Classes      &6    & 7         &17   &50   &39 \\
Features     &3,703 & 1,433        &4,973    & -  & -\\
Multi-label  & No   & No     &No      &YES &YES \\
\hline
\end{tabular}
\label{tab_dataset}
\end{table*}

\subsubsection{The Relation of RSDNE}
RSDNE actually relaxes these above-mentioned strict constraints in shallow methods.
We show in the following that the intra-class similarity loss defined in~\cite{lin2006inter} is a special case of our Eq.~\ref{eq_intra_cost}.
This equivalence also explains the rationale of our method.
\begin{theorem} \label{pro_intra}
In each seen class $c$, let $k_{c}$ and $l_{c}$ denote the intra-class neighbor number and the labeled node number in this class, respectively.
For each labeled class $c$, if we enlarge $k_{c}$ to $l_{c}$, Eq.~\ref{eq_intra_cost} is equivalent to the intra-class similarity equation.
\end{theorem}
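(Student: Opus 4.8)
The plan is to show that, once each per-class intra-class neighbor count $k_c$ is set equal to the number of labeled nodes $l_c$ in that class, the only feasible choice of $S$ in Eq.~\ref{eq_intra_cost} is the one that connects \emph{every} pair of same-labeled nodes, and under that choice $\mathcal{J}_{intra}$ collapses exactly to the classical intra-class similarity loss of~\cite{lin2006inter}. First I would recall the constraints on $S$ restricted to a single seen class $c$: for each labeled node $v_i$ in class $c$ we require $s_i'\mathbf{1}=k_c$, $S_{ii}=0$, $S_{ij}\in\{0,1\}$ whenever $v_j$ is also in class $c$, and $S_{ij}=0$ otherwise. Since there are exactly $l_c$ labeled nodes in class $c$, node $v_i$ has at most $l_c-1$ admissible neighbors (all same-class labeled nodes except itself). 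Setting $k_c=l_c$ would make the constraint $s_i'\mathbf{1}=l_c$ infeasible, so the intended reading (which I would state explicitly) is $k_c=l_c-1$, i.e. "enlarge $k_c$ to its maximum," forcing $S_{ij}=1$ for \emph{all} $i\ne j$ in class $c$; I would note this as the harmless off-by-one in the statement and proceed with $S_{ij}=\mathbf{1}[\mathcal{C}^s_i=\mathcal{C}^s_j,\ i\ne j,\ i,j\in\mathcal{L}]$.

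The key computational step is then to substitute this fully-connected $S$ into $\mathcal{J}_{intra}=\tfrac12\sum_{i,j}\|u_i-u_j\|_F^2 S_{ij}$ and observe that the sum decomposes over the seen classes, since $S_{ij}=0$ across classes and for unlabeled nodes:
\begin{equation}
\label{eq_intra_decomp}
\mathcal{J}_{intra}=\sum_{c\in\mathcal{C}^s}\frac{1}{2}\sum_{\substack{v_i,v_j\in\mathcal{L}\\ \mathcal{C}^s_i=\mathcal{C}^s_j=c}}\left\|u_i-u_j\right\|_F^2 .
\end{equation}
The inner double sum over all ordered pairs within class $c$ is, up to the conventional normalization factor, precisely the within-class scatter term used as the intra-class similarity objective in~\cite{lin2006inter} (each pair counted once by the $\tfrac12$, the diagonal $i=j$ contributing zero). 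I would write out the correspondence term-by-term, matching any class-size weighting $1/l_c$ or $1/l_c^2$ that appears in the reference's formulation — this is the one place where I would be careful, because the equivalence holds up to that normalization constant and the theorem is cleanest if stated "equivalent up to a positive per-class scaling," or if the reference's loss is taken in its unnormalized form.

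The main obstacle I anticipate is not the algebra but pinning down conventions: first the $k_c=l_c$ versus $k_c=l_c-1$ discrepancy (feasibility forces the latter), and second the exact normalization in the "intra-class similarity equation" being cited, since different papers scale the within-class scatter differently. Once those are fixed, the proof is a two-line substitution: maximal $k_c$ uniquely determines $S$ as the complete same-class graph by the counting/feasibility argument, and plugging that $S$ into Eq.~\ref{eq_intra_cost} yields Eq.~\ref{eq_intra_decomp}, which is the target loss. I would close by remarking that this shows the general Eq.~\ref{eq_intra_cost} with a tunable $k_c\ll l_c$ is a strict relaxation of the rigid all-pairs constraint, which is exactly the rationale the section is arguing for.
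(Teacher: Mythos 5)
Your proof follows essentially the same route as the paper's: force $S$ to be the complete same-class graph by taking $k_c$ maximal, substitute into Eq.~\ref{eq_intra_cost}, and identify the resulting class-wise sum with the within-class scatter of~\cite{lin2006inter}. Your two side remarks are correct and in fact sharper than the paper's own two-line argument, which silently ignores that the constraint $S_{ii}=0$ makes $k_c=l_c$ infeasible (the attainable maximum is $l_c-1$) and that the factor $\tfrac{1}{2}$ matches the cited loss only up to a positive constant, which of course does not affect the minimizer.
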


\begin{proof}\label{proof_pro_intra}
The intra-class similarity function in ~\cite{lin2006inter} is defined to minimize:
\begin{equation}
\label{eq_intra_cost_citation}
\sum_{i=1}^{n}\sum_{j:\mathcal{C}^{s}_{i} = \mathcal{C}^{s}_{j}} \left \| u_{i} - u_{j} \right \|^{2}_{F}
\end{equation}

In each seen class $c$, if we set $k_{c}=l_c$, Eq.~\ref{eq_intra_cost} actually minimizes:
\begin{equation}
\label{eq_intra_cost_full}
\begin{aligned}
 &\hspace{2.5em}  \sum_{i,j=1}^{n} \left \| u_{i} - u_{j} \right \|^{2}_{F} S_{ij} \\
\mathrm{s.t.} &\hspace{0.5em}  \forall i,j\in \mathcal{L}, S_{ij} \in \{0,1\}, \ \mathrm{if} \ \mathcal{C}^{s}_{i} = \mathcal{C}^{s}_{j} \\
            &\hspace{0.5em} \forall i,j, S_{ij}=0, \ \mathrm{if} \ i \notin \mathcal{L} \ \mathrm{or}\ \mathcal{C}^{s}_{i} \ne \mathcal{C}^{s}_{j} \\
\end{aligned}
\end{equation}
As Eq.~\ref{eq_intra_cost_full} equals Eq.~\ref{eq_intra_cost_citation}, the conclusion is proved.
\end{proof}

Similarly, in RSDNE, the objective function part formulated in Eq.~\ref{eq_inter_cost} actually relaxes the classical inter-class dissimilarity.
Specifically, in Eq.~\ref{eq_inter_cost}, $W_{ij}$ measures the similarity score between node $v_i$ and $v_j$.
For two different labeled nodes $v_i$ and $v_j$, setting $W_{ij}$ to a large negative number reflects the intuition of inter-class dissimilarity.
In sum, these two relaxation strategies not only reasonably guarantee both intra-class similarity and inter-class dissimilarity, but also avoid misleading the jointly trained graph structure preserving model.
Consequently, RSDNE would benefit from completely-imbalanced labels, which is further verified in our experiments.

\subsubsection{The Relation of \mydeepalg}\label{sect_compare_deep}
\mydeepalg\ and traditional GNNs share the similar neural network architecture.
The fundamental difference is the objective function.
Traditional GNNs preserve the class-label discrimination.
\mydeepalg\ aims to preserve the class-semantic knowledge.
As shown in related ZSL studies, class-semantic knowledge enables the knowledge transfer from seen classes to unseen classes, making the learned embeddings reflect the supervised knowledge of both seen and unseen classes.
Intuitively, \mydeepalg\ can also be seen as a relaxation of the class-label discrimination by preserving the class-semantic knowledge.

\section{Experiments}\label{sect_expriment}
\mysubsection*{Datasets}
We conduct our experiments on five real-world graphs, whose statistics are listed in Table~\ref{tab_dataset}.
Citeseer~\cite{mccallum2000automating} and Cora~\cite{mccallum2000automating} are citation graphs whose nodes are articles, edges are citations, and labels are research areas.
Wiki~\cite{sen2008collective} is a set of Wikipedia pages.
In this dataset, nodes are web pages, edges are hyperlinks among them, and labels are topics.
PPI~\cite{grover2016node2vec} is a biological graph dataset, and Blogcatalog~\cite{tang2009relational} is a social graph dataset.
Their labels are biological states and user interests, respectively.
Unlike the previous ones, the nodes in these two graphs may have multiple labels.
In addition, these two graphs do not have node features, and we use the rows of their adjacency matrices as node features.

\mysubsection*{Baseline Methods}
We compare the proposed methods against the following baselines:

\begin{table*}[!t]
\setlength{\tabcolsep}{4pt} 
\scriptsize
\caption{Micro-F1 scores on classification tasks.
The best result is marked in bold.
In the case of no node features, the best result is marked with underline.}
\centering
\begin{tabular}{cc||l|ll|lll|ll|ll|ll|l|ll}
\hline
\multicolumn{2}{c||}{\emph{\textbf{Information}}} & \multicolumn{1}{c|}{$X$} & \multicolumn{2}{c|}{$A$} & \multicolumn{3}{c|}{$A$, $\mathcal{L}$} &\multicolumn{2}{c|}{$A$, $X$} & \multicolumn{2}{c|}{$A$, $X$, $\mathcal{L}$} &  \multicolumn{2}{c|}{$A$, $\mathcal{L}$} &$A$, $X$ & \multicolumn{2}{c}{$A$, $X$, $\mathcal{L}$}\\
\hline
\multicolumn{2}{c||}{\backslashbox{\emph{\textbf{Data}}}{\emph{\textbf{Method}}}} & NodeFeats & MFDW & LINE & LSHM & LDE  &MMDW & TADW &DGI &GCN &APPNP & RSDNE & RSDNE* & \mydeepalg-N & \mydeepalg-L & \mydeepalg\\
\hline
\hline
\multirow{3}*{Citeseer}
&10\% &0.6535 &0.4810 &0.4448 &0.4253 &0.4515 &0.5141 &0.6844 &0.7014 &0.5640 &0.5944 &0.5395 &\underline{0.5426} &0.6975 &0.6601 &\textbf{0.7083} \\
&30\% &0.7006 &0.5793 &0.4959 &0.5504 &0.5224 &0.6020 &0.7187 &0.7293 &0.5889 &0.6274 &\underline{0.6313} &0.6271 &0.7301 &0.7154 &\textbf{0.7403}  \\
&50\% &0.7161 &0.6096 &0.5084 &0.6027 &0.5805 &0.6278 &0.7276 &0.7377 &0.5995 &0.6356 &\underline{0.6741} &0.6683 &0.7359 &0.7294 &\textbf{0.7475}  \\
\hline
\multirow{3}*{Cora}
&10\% &0.6508 &0.6699 &0.6678 &0.5981 &0.6641 &0.7149 &0.7978 &0.7996 &0.6436 &0.7068 &\underline{0.7569} &0.7513 &0.8187 &0.7617 &\textbf{0.8197} \\
&30\% &0.7214 &0.7908 &0.7220 &0.7254 &0.7449 &0.7939 &0.8245 &0.8350 &0.6696 &0.7347 &\underline{0.8184} &0.8147 &0.8524 &0.8208 &\textbf{0.8561}  \\
&50\% &0.7589 &0.8164 &0.7373 &0.7487 &0.7705 &0.8135 &0.8361 &0.8366 &0.6786 &0.7607 &\underline{0.8426} &0.8372 &0.8550 &0.8331 &\textbf{0.8615}  \\
\hline
\multirow{3}*{Wiki}
&10\% &0.1741 &0.3570 &0.5586 &0.4319 &0.4920 &0.5582 &0.5899 &0.5423 &0.6616 &0.6189 &0.5803 &\underline{0.5822} &0.7028 &0.7006 &\textbf{0.7180} \\
&30\% &0.2212 &0.5579 &0.6170 &0.5658 &0.5846 &0.6224 &0.6669 &0.6005 &0.6952 &0.6463 &0.6477 &\underline{0.6493} &0.7363 &0.7534 &\textbf{0.7580} \\
&50\% &0.2616 &0.6303 &0.6434 &0.5838 &0.6158 &0.6419 &0.6845 &0.6274 &0.7033 &0.6578 &\underline{0.6772} &0.6751 &0.7457 &0.7704 &\textbf{0.7711} \\
\hline
\multirow{3}*{PPI}
&10\% &0.0980 &0.1447 &0.1391 &0.0306 &- &- &0.1379 &0.1433 &0.0469 &0.0439 &- &- &0.1518 &\underline{0.1537} &\textbf{0.1659} \\
&30\% &0.1390 &0.1799 &0.1693 &0.0626 &- &- &0.1724 &0.1671 &0.0449 &0.0458 &- &- &\underline{0.1873} &0.1773 &\textbf{0.1956} \\
&50\% &0.1660 &0.1833 &0.1816 &0.0891 &- &- &0.1809 &0.1715 &0.0438 &0.0410 &- &- &\underline{0.1960} &0.1834 &\textbf{0.2065} \\
\hline
\multirow{3}*{Blogcatalog}
&10\% & 0.2683	&0.3192	&0.3311	&0.1632	&-	&-	&0.3302	&0.2371	&0.0271	&0.1121	&-	&-	&\underline{0.3372}	&0.3076	&\textbf{0.3399} \\
&30\% &0.2984	&0.3436	&0.3504	&0.2357	&-	&-	&0.3409	&0.2654	&0.0316	&0.1364	&-	&-	&\underline{0.3571}	&0.3261	&\textbf{0.3627} \\
&50\% &0.3249	&0.3485	&0.3600	&0.2803	&-	&-	&0.3431	&0.2741	&0.0492	&0.1365	&-	&-	&\underline{0.3621}	&0.3321	&\textbf{0.3692} \\
\hline
\end{tabular}
\label{tab_micro-f1}
\end{table*}

\begin{table*}[!t]
\setlength{\tabcolsep}{4pt} 
\scriptsize
\caption{Macro-F1 scores on classification tasks. The bold mark and underline mark have the same meanings as in Table~\ref{tab_micro-f1}.}
\centering
\begin{tabular}{cc||l|ll|lll|ll|ll||ll|l|ll}
\hline
\multicolumn{2}{c||}{\emph{\textbf{Information}}} & \multicolumn{1}{c|}{$X$} & \multicolumn{2}{c|}{$A$} & \multicolumn{3}{c|}{$A$, $\mathcal{L}$} &\multicolumn{2}{c|}{$A$, $X$} & \multicolumn{2}{c|}{$A$, $X$, $\mathcal{L}$} &  \multicolumn{2}{c|}{$A$, $\mathcal{L}$} &$A$, $X$ & \multicolumn{2}{c}{$A$, $X$, $\mathcal{L}$}\\
\hline
\multicolumn{2}{c||}{\backslashbox{\emph{\textbf{Data}}}{\emph{\textbf{Method}}}} & NodeFeats & MFDW & LINE & LSHM & LDE  &MMDW & TADW &DGI &GCN &APPNP & RSDNE & RSDNE* & \mydeepalg-N & \mydeepalg-L & \mydeepalg\\
\hline
\hline
\multirow{3}*{Citeseer}
&10\% &0.5860 &0.4195 &0.3856 &0.3724 &0.4155 &0.4707 &0.6294 &0.6310 &0.4761 &0.5175 &0.4949 &\underline{0.4994} &0.6233 &0.6089 &\textbf{0.6541} \\
&30\% &0.6504 &0.5253 &0.4315 &0.4990 &0.5030 &0.5606 &0.6679 &0.6382 &0.5085 &0.5567 &\underline{0.5939} &0.5852 &0.6669 &0.6647 &\textbf{0.6919}  \\
&50\% &0.6692 &0.5559 &0.4403 &0.5454 &0.5540 &0.5835 &0.6788 &0.6595 &0.5159 &0.5665 &\underline{0.6385} &0.6285 &0.6798 &0.6799 &\textbf{0.7016}  \\
\hline
\multirow{3}*{Cora}
&10\% &0.6182 &0.6598 &0.6478 &0.5595 &0.6453 &0.7043 &0.7823 &0.7540 &0.5623 &0.6308 &\underline{0.7436} &0.7367 &0.8084	&0.7457	&\textbf{0.8094} \\
&30\% &0.7103 &0.7819 &0.7099 &0.6625 &0.7343 &0.7830 &0.8127 &0.8257 &0.5856 &0.6641 &\underline{0.8073} &0.8029 &0.8438 &0.8008 &\textbf{0.8462}  \\
&50\% &0.7430 &0.8081 &0.7284 &0.6798 &0.7628 &0.8045 &0.8245 &0.8277 &0.5991 &0.6937 &\underline{0.8318} &0.8267 &0.8455 &0.8196 &\textbf{0.8502}  \\
\hline
\multirow{3}*{Wiki}
&10\% &0.0538 &0.2835 &0.4025 &0.3099 &0.3872 &0.4190 &0.4538 &0.3615 &0.4680 &0.4031 &\underline{0.4518} &0.4468 &0.5405 &0.5525 &\textbf{0.5789} \\
&30\% &0.1110 &0.4333 &0.4738 &0.3869 &0.4641 &0.4973 &0.5651 &0.4270 &0.4939 &0.4365 &0.5326 &\underline{0.5363} &0.6093 &0.6206 &\textbf{0.6480} \\
&50\% &0.1530 &0.4958 &0.5136 &0.4209 &0.5047 &0.5257 &0.6208 &0.4387 &0.4954 &0.4486 &\underline{0.5741} &0.5655 &0.6340 &0.6490 &\textbf{0.6573} \\
\hline
\multirow{3}*{PPI}
&10\% &0.0574 &0.0915 &0.0854 &0.0148 &- &- &0.0851 &0.0833 &0.0153 &0.0156 &- &- &0.0966 &\underline{0.1133} &\textbf{0.1191} \\
&30\% &0.0902 &0.1204 &0.1040 &0.0316 &- &- &0.1102 &0.0980 &0.0156 &0.0189 &- &- &\underline{0.1262} &0.1238 &\textbf{0.1402} \\
&50\% &0.1083 &0.1205 &0.1222 &0.0522 &- &- &0.1183 &0.1070 &0.0141 &0.0176 &- &- &\underline{0.1327} &0.1248 &\textbf{0.1491} \\
\hline
\multirow{3}*{Blogcatalog}
&10\% &0.1008	&0.1488	&0.1472	&0.0385	&-	&-	&0.1438	&0.0794	&0.0131	&0.0281	&-	&-	&\underline{0.1596}	&0.1187	&\textbf{0.1622} \\
&30\% &0.1157	&0.1721	&0.1727	&0.0894	&-	&-	&0.1571	&0.1042	&0.0139	&0.0299	&-	&-	&\underline{0.1887}	&0.1335	&\textbf{0.1921} \\
&50\% &0.1369	&0.1787	&0.1806	&0.1285	&-	&-	&0.1584	&0.1166	&0.0151	&0.0293	&-	&-	&\underline{0.1974}	&0.1396	&\textbf{0.1997} \\
\hline
\end{tabular}
\label{tab_macro-f1}
\end{table*}

\begin{enumerate}
  \item NodeFeats is a content-only baseline which only uses the original node features.
  \item MFDW~\cite{yang2015network} is the matrix factorization form of DeepWalk~\cite{perozzi2014deepwalk}. This method is unsupervised.
  \item LINE~\cite{tang2015line} is also a popular unsupervised method which considers the first-order and second-order proximity information.
  \item LSHM~\cite{jacob2014learning} is a semi-supervised method which considers the first-order proximity of a graph and jointly learns a linear classification model.
  \item LDE~\cite{wang2016linked} is a semi-supervised method which also considers the first-order proximity and jointly trains a 1-nearest neighbor classification model.
  \item MMDW~\cite{tu2016max} is a semi-supervised method which adopts MFDW model to preserve the graph structure and jointly trains an SVM model.
  \item TADW~\cite{yang2015network} is an unsupervised method which incorporates DeepWalk and associated node features
into the matrix factorization framework.
  \item DGI~\cite{velivckovic2019deep} is a recently proposed unsupervised GNN method which trains a graph convolutional encoder through maximizing mutual information.
  \item GCN~\cite{kipf2017semi} is the most well-known GNN method. This method is supervised.
  \item APPNP~\cite{klicpera2019predict} extends GCN with the idea of PageRank to explore the global graph structure. This method is also supervised.
\end{enumerate}

\mysubsection*{Parameters}
Following~\cite{tu2016max}, the embedding dimension is set to 200.
In addition, for DeepWalk, we adopt the default parameter setting i.e., window size is 5, walks per vertex is 80.
For LINE, we first learn two 100-dimension embeddings by adopting its first-order proximity and second-order proximity separately, and then concatenate them as suggested in~\cite{tang2015line}.
To fully show the limitations of those semi-supervised methods, we also tune their parameters by a “grid-search” strategy from $\{10^{-2},10^{-1},10^{0},10^{1},10^{2}\}$ and report the best results.
For these three GNNs (DGI, GCN and APPNP), we all use the code provided by the authors and adopt the default hyper-parameters.
As GCN and APPNP are end-to-end node classification methods, we use the outputs of their hidden layer (whose hidden units number is set to 200) as embedding results.
Additionally, as the original implementations of GCN and APPNP do not support multi-label tasks, we replace their loss functions by Binary Cross-entropy loss on PPI and Blogcatalog datasets.

In contrast, in RSDNE and its light version RSDNE*, we fix parameters $\alpha {=} 1$ and $\lambda {=} 0.1$ throughout the experiment.
In addition, we simply set the intra-class neighbor number $k{=}5$ like most manifold learning methods~\cite{zhu2006semi}, and set the candidate number $\bar{k}{=}20k$ for \myalg$^{*}$.

The settings of our \mydeepalg\ method and its two sub-methods are as follows.
We use \mydeepalg-L to denote the sub-method with the semantic preserving loss (i.e., Eq.~\ref{eq_semantic_loss}), and we use \mydeepalg-N to denote the sub-method with the graph preserving loss (i.e., Eq.~\ref{eq_deep_graph_loss}).
In \mydeepalg-L, we train a simple model with one GCN layer and one FC layer.
In addition, we use a simple averaging function as its readout function $\mathcal{R}$; and we apply SVD decomposition on the original node features to get 200-dimensional node features, for the calculation of semantic preserving loss.
Like the compared GNN baselines, we also use the outputs of our hidden layer in \mydeepalg-L as embedding results.
In \mydeepalg-N, we train a simple model with only one GCN layer.
In both sub-methods, we use the PReLU activation~\cite{he2015delving}, mean squared error (MSE) loss, and Xavier initialization~\cite{glorot2010understanding}.
We train all models for 100 epochs (training iterations) using Adam SGD optimizer~\cite{kingma2014adam} with a learning rate of 0.001.
Unless otherwise noted, all these settings are used throughout the experiments.

\subsection{Test with Completely-imbalanced Label}\label{subsect_exp_1}
\mysubsection*{Experimental setting}
Following~\cite{perozzi2014deepwalk}, we validate the quality of learned representations on node classification task.
As this study focuses on the completely-imbalanced label setting, we need to perform seen/unseen class split and remove the unseen classes from the training data.
Particularly, for Citeseer and Cora, we use two classes as unseen.
Thus, we have $C_{6}^{2}$ and $C_{7}^{2}$ different seen/unseen splits for Citeseer and Cora, respectively.
As Wiki, PPI and Blogcatalog contain much more classes, we randomly select five classes as unseen classes and repeat the split for 20 times.

The detailed experimental procedure is as follows.
First, we randomly sample some nodes as the training set (denoted as $\mathcal{L}$), and use the rest as the test set.
Then, we remove the unseen class nodes from $\mathcal{L}$ so as to obtain the completely-imbalanced labeled data $\mathcal{L}'$.
With the graph knowledge (i.e., $A$ and $X$) and $\mathcal{L}'$, we get the representations learned by various methods.
Note that no method can use the labeled data from unseen classes for embedding.
After that, we train a linear SVM classifier based on the learned representations and the original label information $\mathcal{L}$.
At last, the trained SVM classifier is evaluated on the test data.

\begin{figure*}[!t]\centering
\subfigure{\includegraphics[width=0.21\textwidth]{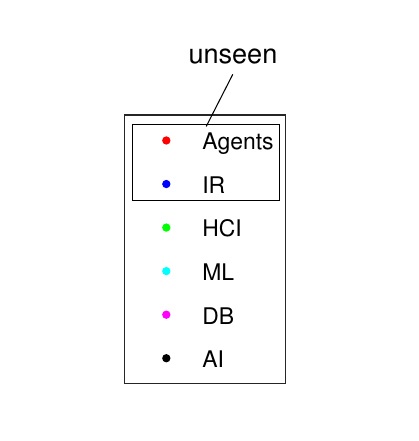}}
\addtocounter{subfigure}{-1}
\subfigure[LSHM]{    \includegraphics[width=0.235\textwidth]{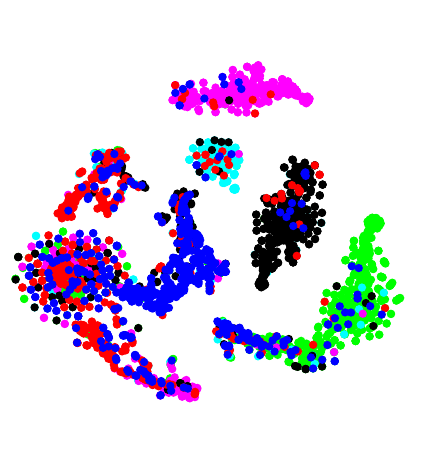}}
\subfigure[LDE]{    \includegraphics[width=0.235\textwidth]{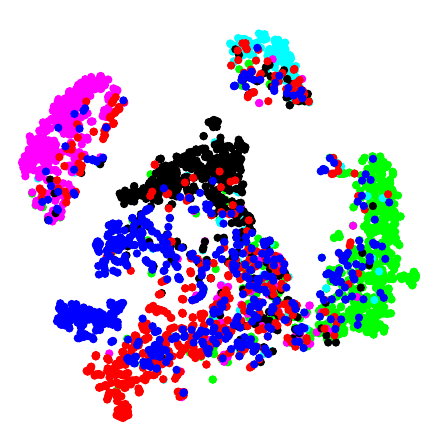}}
\subfigure[MMDW]{    \includegraphics[width=0.235\textwidth]{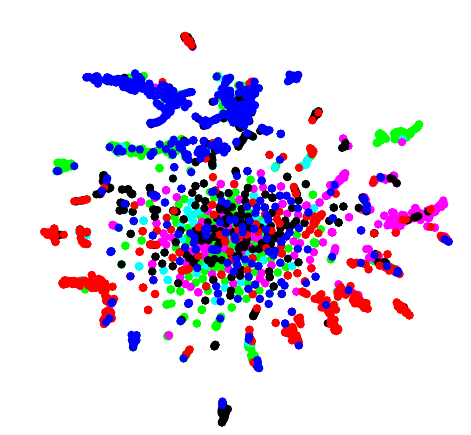}}
\subfigure[GCN]{    \includegraphics[width=0.235\textwidth]{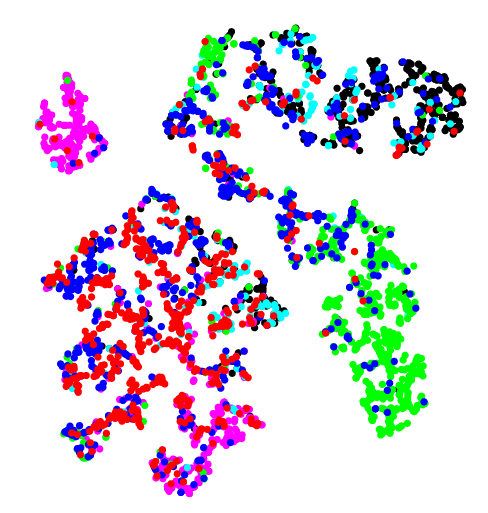}}
\subfigure[APPNP]{    \includegraphics[width=0.235\textwidth]{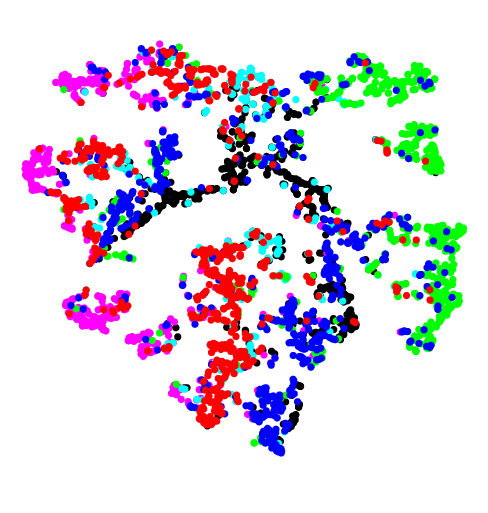}}
\subfigure[RSDNE]{    \includegraphics[width=0.235\textwidth]{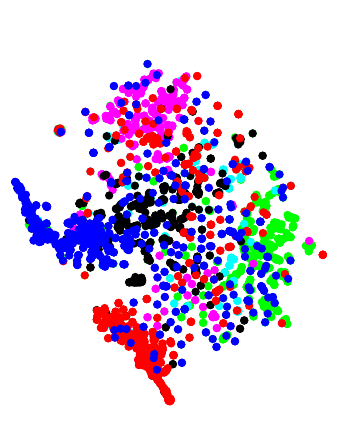}}
\subfigure[RSDNE$^*$]{    \includegraphics[width=0.235\textwidth]{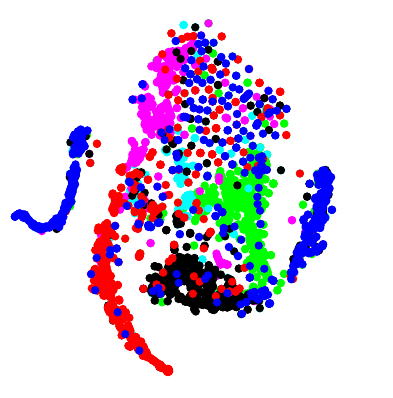}}
\subfigure[\mydeepalg-N]{    \includegraphics[width=0.235\textwidth]{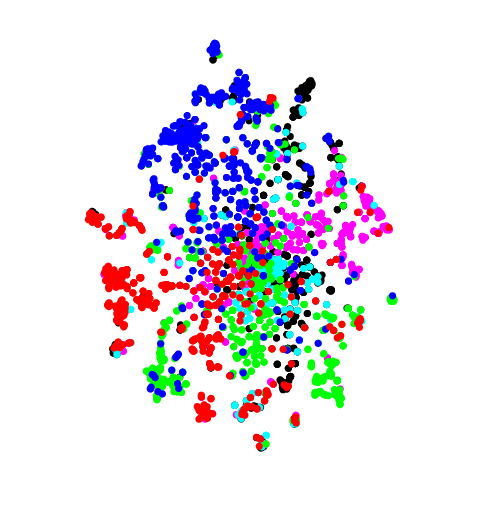}}
\subfigure[\mydeepalg-L]{    \includegraphics[width=0.235\textwidth]{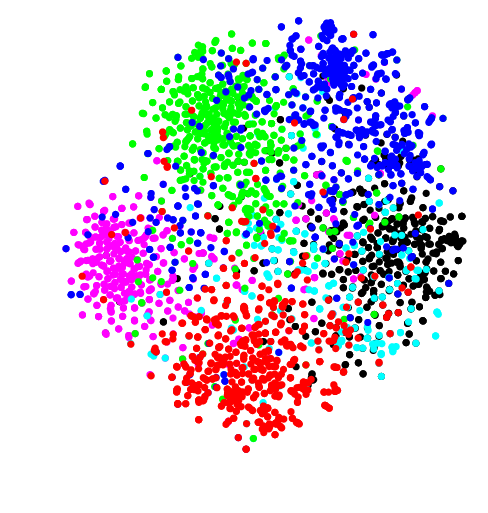}}
\subfigure[\mydeepalg]{    \includegraphics[width=0.235\textwidth]{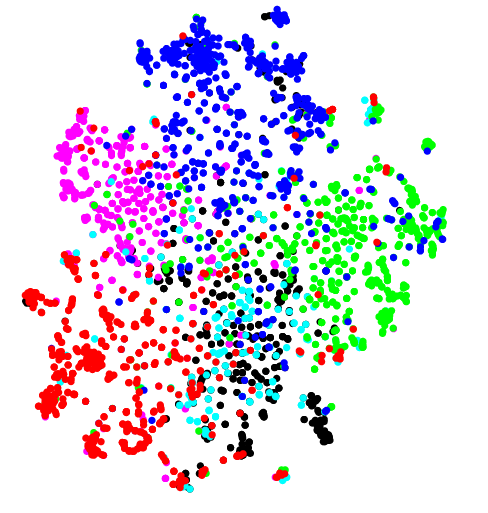}}
\caption{
2D visualization on Citeseer (50\% label rate with two unseen classes, i.e.,\{Agents, IR\}).}
\label{fig_2d}
\end{figure*}

\subsubsection{Node Classification Performance}
We vary the percentage of labeled data in [10\%, 30\%, 50\%] and then use the labeled nodes of seen classes as supervision for graph embedding learning.
We employ two widely used classification evaluation metrics: Micro-F1 and Macro-F1~\cite{yang1999evaluation}.
In particular, Micro-F1 is a weighted average of F1-scores over different classes, while Macro-F1 is an arithmetic mean of
F1-scores on each label:
\begin{equation}
\label{eq_micro_macro}
\begin{aligned}
\mathrm{Micro} \small{-} \mathrm{F1} & = \frac{\sum_{i=1}^{|\mathcal{C}|} 2\mathit{TP}^i}{\sum_{i=1}^{|\mathcal{C}|}(2\mathit{TP}^i+\mathit{FP}^i+\mathit{FN}^i)} \\
\mathrm{Macro} \small{-} \mathrm{F1} & = \frac{1}{|\mathcal{C}|} \sum_{i=1}^{|\mathcal{C}|} \frac{2\mathit{TP}^i}{(2\mathit{TP}^i+\mathit{FP}^i+\mathit{FN}^i)}
\end{aligned}
\end{equation}%
where $|\mathcal{C}|$ is the class number, $\mathit{TP}^i$ denotes the number of positives in the $i$-th class, $\mathit{FP}^i$ and $\mathit{FN}^i$ denotes the number of false positives and false negatives in the $i$-th class, respectively.

The results are presented in Tables~\ref{tab_micro-f1} and~\ref{tab_macro-f1}, from which we have the following observations\footnote{We do not test LDE, MMDW, RSDNE, and RSDNE* on PPI and Blogcatalog, since these methods could not handle the multi-label case.
Some experiments tested on more label rates can be found in~\cite{wang2018rsdne}.
}.

Firstly, our deep method \mydeepalg\ always achieves the best results on all datasets including both single-label and multi-label graphs.
This can be explained by the performance of \mydeepalg-L.
We can clearly find that \mydeepalg-L always outperforms the compared semi-supervised GNNs (i.e., GCN and APPNP) by a large margin (around 20\% to 300\% relatively).
This indicates that, by exploring the class-semantic knowledge, \mydeepalg\ can effectively utilize the completely-imbalanced labels.

Secondly, our shallow method \myalg\ and its light version both perform much better than all baselines which do not use node attributes.
For example, with 50\% labeled data, our two methods outperform the best baseline MMDW by 7--12\% relatively in term of Micro-F1.
The underlying principle is that our approximation models (i.e., Eq.~\ref{eq_intra_cost} and Eq.~\ref{eq_inter_cost}) reasonably guarantee both intra-class similarity and inter-class dissimilarity, and meanwhile avoids misleading the jointly trained graph structure preserving model.
Besides, the light version of our method \myalg$^{*}$ is competitive with \myalg.
This means that we can reduce the intra-class neighbor candidate number to make our method more efficient.

Thirdly, our deep method \mydeepalg\ is more powerful than our shallow method \myalg.
For example, in Citeseer with 30\% labeled data, \mydeepalg\ outperforms \myalg\ by 17.46\% relatively in term of Micro-F1.
The reason mainly lies in two folds.
On the one hand, benefiting from the powerful GNN layers, \mydeepalg\ could utilize the attributes of nodes.
On the other hand, exploring the knowledge of class-semantic descriptions (via a simple readout function) enables \mydeepalg\ to handle multi-label setting.

Lastly, all compared semi-supervised baselines become ineffective, and some of them even perform worse than unsupervised ones.
For example, LSHM and LDE achieve lower accuracy than MFDW in most cases;
GCN and APPNP also perform worse than DGI almost all the time.
This is consistent with our theoretical analysis (Section~\ref{sect_compare}) that traditional semi-supervised methods could get unappealing results in this completely-imbalanced label setting.

\subsubsection{Graph Layouts}
Following~\cite{tang2015line}, we use t-SNE package~\cite{maaten2008visualizing} to map the learned representations of Citeseer into a 2D space.
Without loss of generality, we simply adopt Citeseer's first two classes as unseen classes, and set the training rate to 50\%.
(Due to space limitation, we only visualize the embeddings obtained by semi-supervised methods.)

First of all, the visualizations of our GNN methods (\mydeepalg\ and its sub-methods), as expected, exhibit the most discernible clustering.
Especially, as shown in Fig.~\ref{fig_2d}(i), \mydeepalg-L which utilizes label information successfully respects the six topic classes of Citeseer.
In this visualization, we also note that the clusters of different classes do not separate each other by a large margin.
This is consistent with our analysis that the class-semantic preservation can be seen as a relaxation of the classical classification loss.
Additionally, as shown in Fig.~\ref{fig_2d}(j), \mydeepalg\ obtains the best visualization result, in which different topic classes are clearly separated.

Additionally, the visualizations of our RSDNE and RSDNE$^*$ are also quite clear, with meaningful layout for both seen and unseen classes.
As shown in Figs.~\ref{fig_2d}(f-g), the nodes of the same class tend to lie on or close to the same manifold.
Notably, the nodes from two unseen classes avoid heavily mixing with the wrong nodes.
Another surprising observation is that: compared to RSDNE, the embedding results of its light version (i.e., RSDNE*) seem to lie on more compact manifolds.
The reason might be that RSDNE* has a stricter manifold constraint, i.e., a labeled node's $k$ intra-class neighbors are adaptively selected from a predetermined candidate set.
The similar observation can be found in traditional manifold learning methods~\cite{roweis2000nonlinear} in which the neighbor relationships among instances are predetermined.

In contrast, all the compared semi-supervised baselines get unappealing visualizations. For example, as shown in Figs.~\ref{fig_2d}(a-b), although LSHM and LDE better cluster and separate the nodes from different seen classes, their two kinds of unseen class nodes heavily mix together.
The similar observation can be found in the results of semi-supervised GNNs (i.e., GCN and APPNP), as shown in Figs.~\ref{fig_2d}(d-e).
In addition, as shown in Fig.~\ref{fig_2d}(c), MMDW also fails to benefit from the completely-imbalanced labels.
This is because MMDW has to use a very small weight for its classification model part to avoid poor performance.

\subsubsection{Effectiveness Verification}
In the following experiments, we only show the results on Citeseer, since we get similar results on the other datasets.

\begin{figure}[t]
\centering
\subfigure{
    \includegraphics[width=0.22\textwidth]{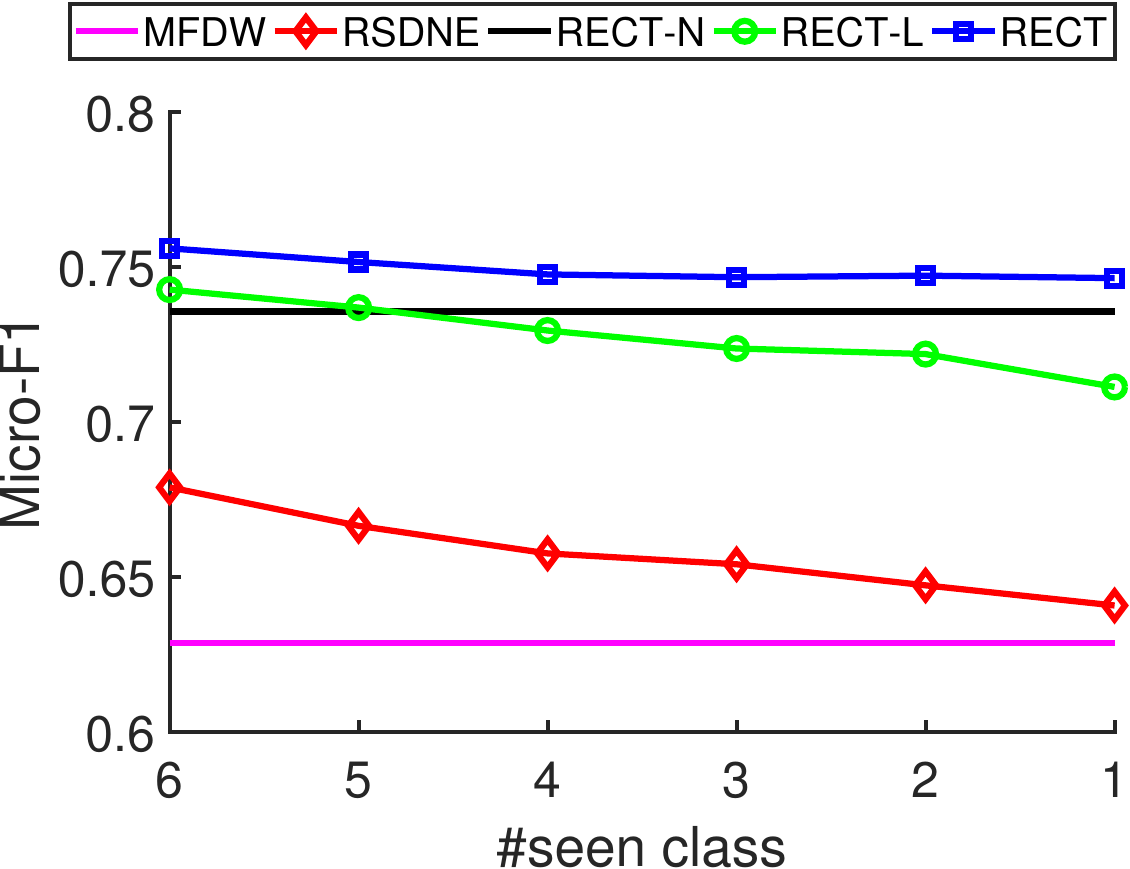}
}
\subfigure{
    \includegraphics[width=0.22\textwidth]{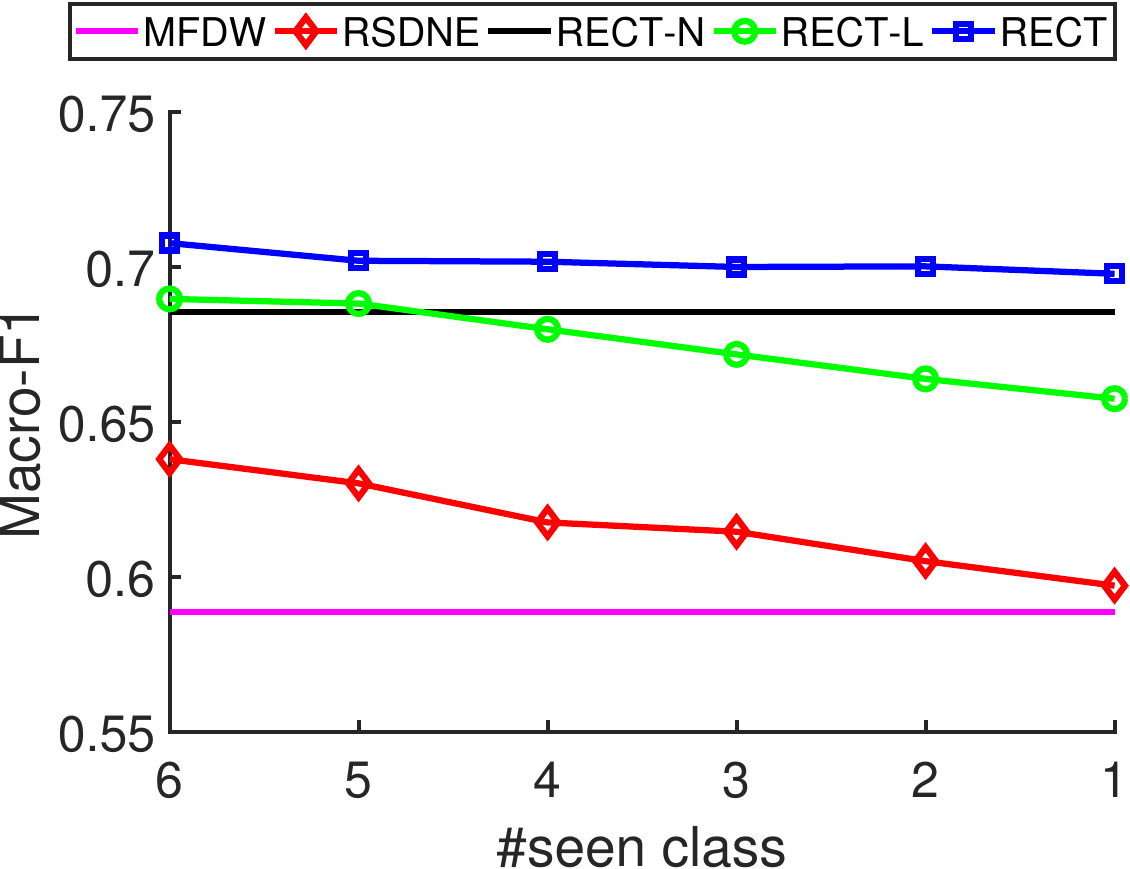}
}
\caption{Node classification performance w.r.t. the seen class number on Citeseer (with 50\% label rate).}
\label{fig_remove_number}
\end{figure}
\begin{figure}[!t]
\centering
\subfigure{
    \includegraphics[width=0.22\textwidth]{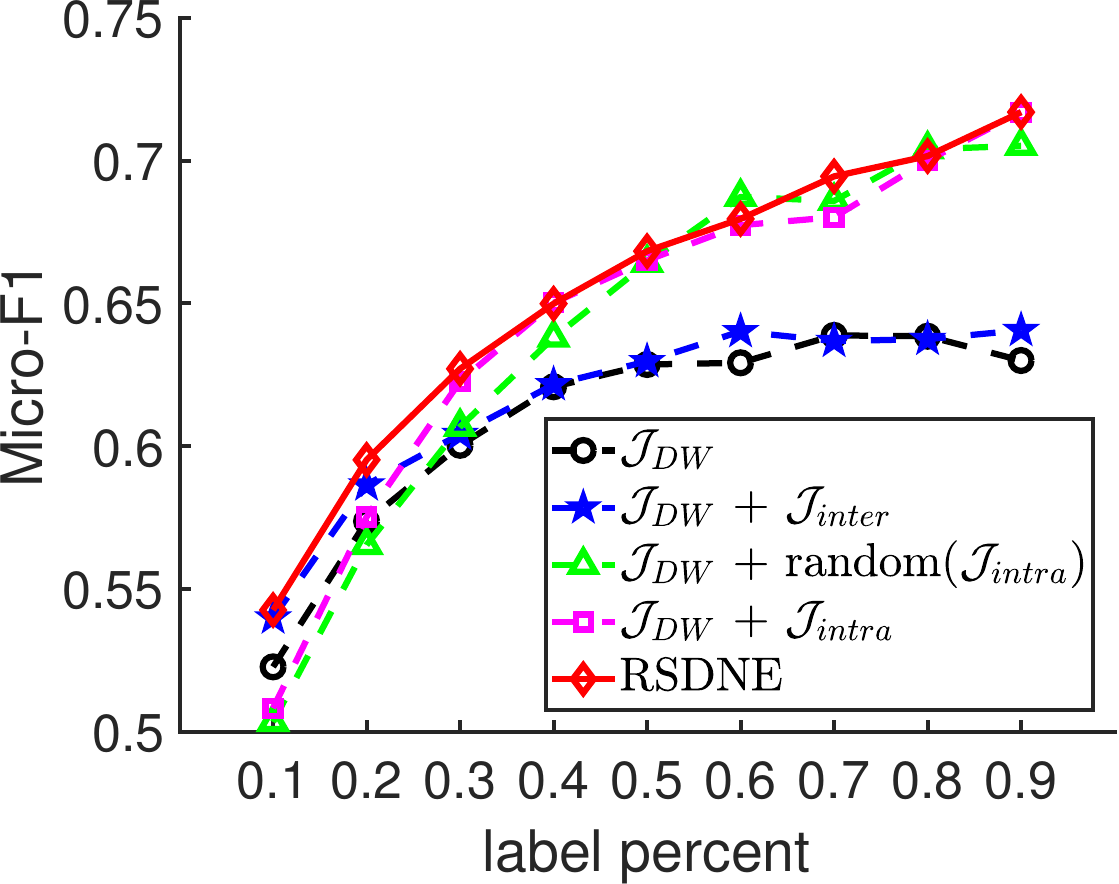}
}
\subfigure{
    \includegraphics[width=0.22\textwidth]{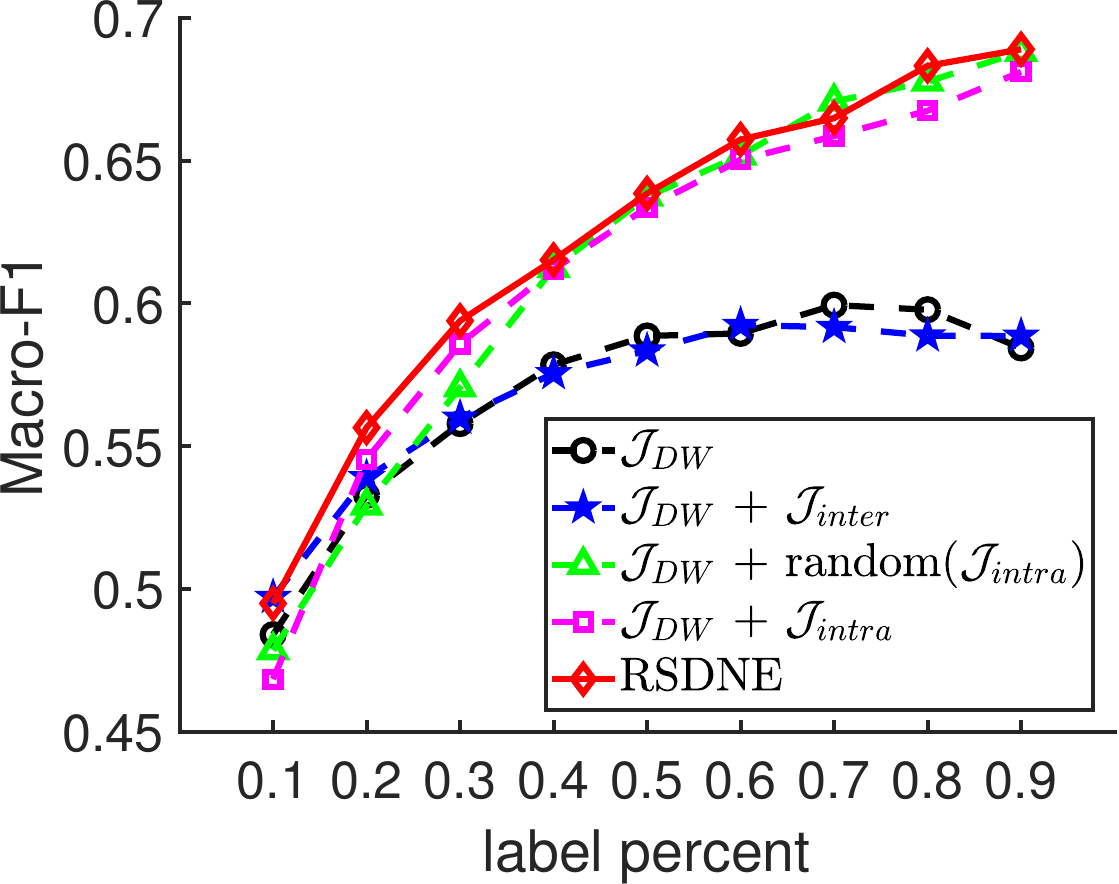}
}
\caption{Node classification performance w.r.t. different settings of \myalg\ on Citeseer.}
\label{fig_cite_function}
\end{figure}

\begin{figure}[t!]
\centering
\subfigure{
    \includegraphics[width=0.22\textwidth]{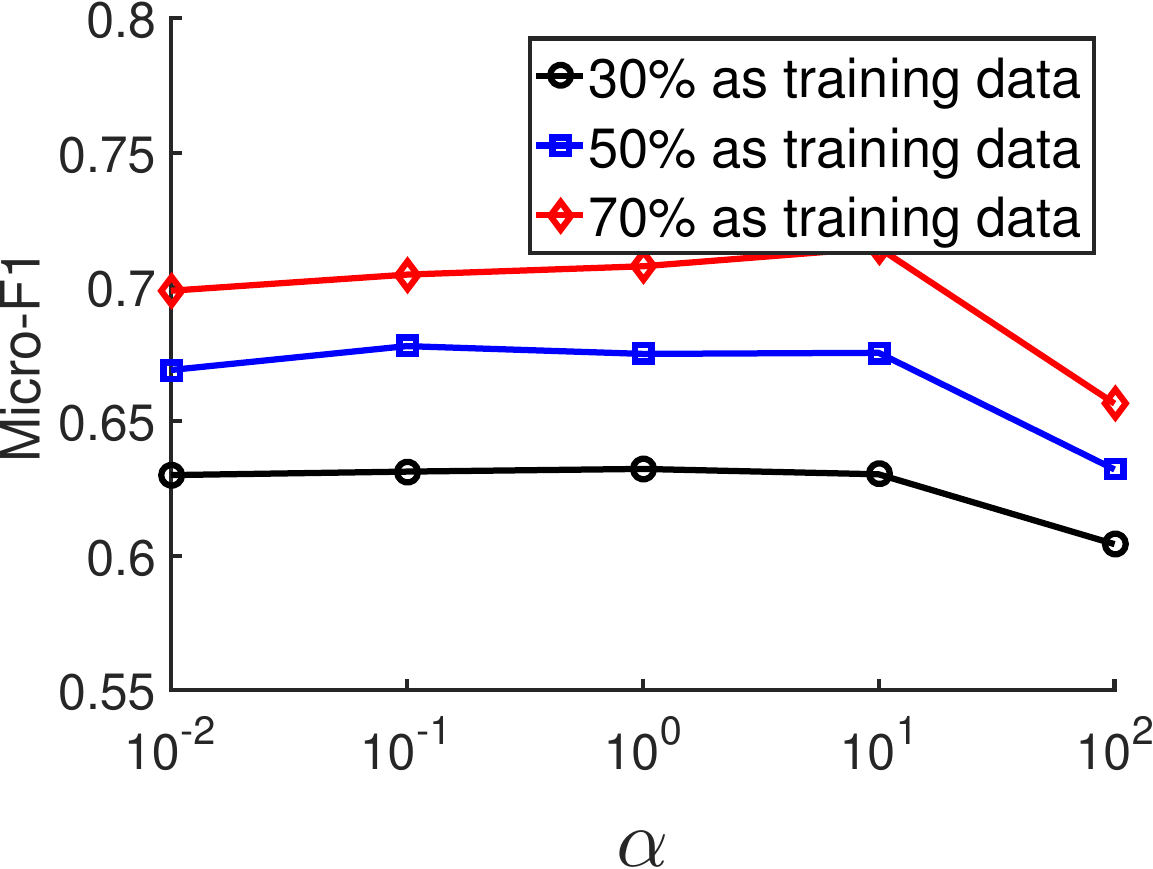}
}
\subfigure{
    \includegraphics[width=0.22\textwidth]{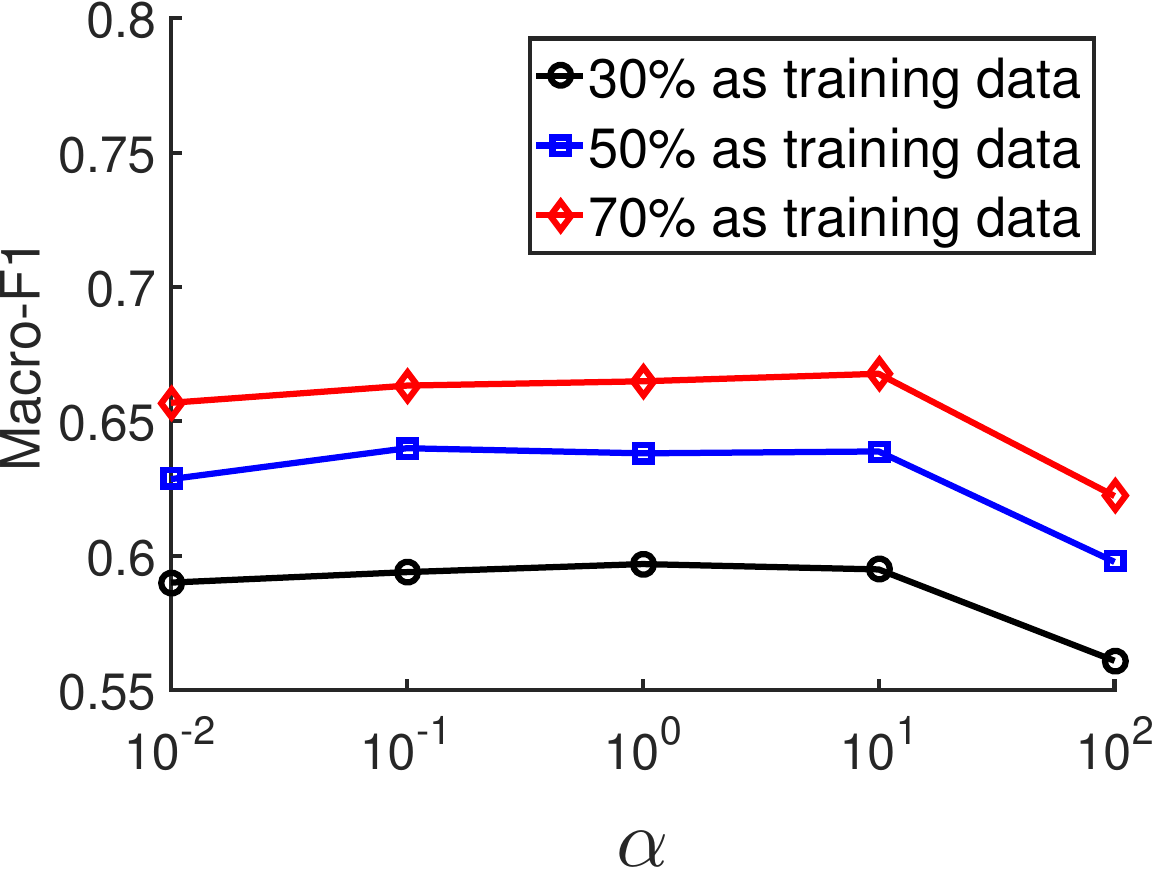}
}
\caption{The effect of parameter $\alpha$ in RSDNE on Citeseer.}
\label{fig_para}
\end{figure}

\begin{figure}[t!]
\centering
\subfigure{
    \includegraphics[width=0.22\textwidth]{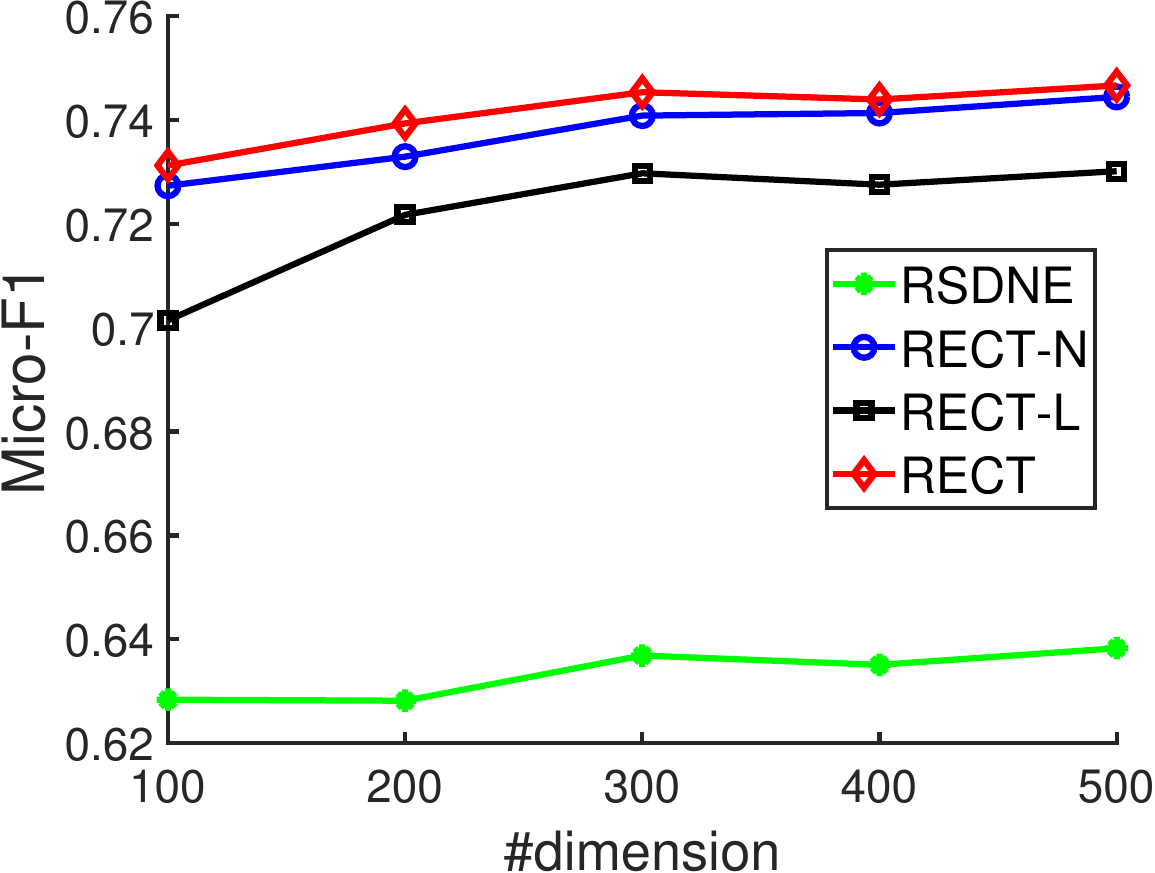}
}
\subfigure{
    \includegraphics[width=0.22\textwidth]{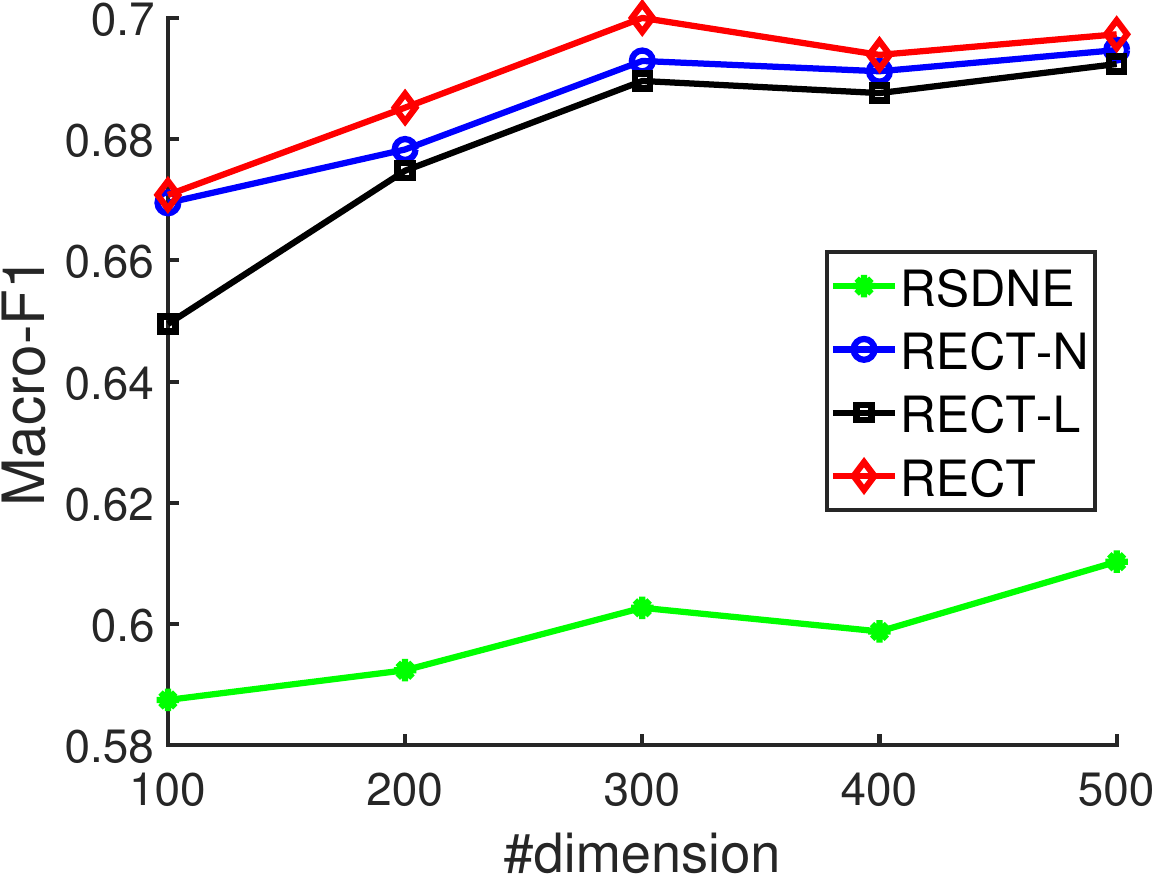}
}
\caption{The effect of embedding dimension on Citeseer with label rate 30\%.}
\label{fig_effect_dim}
\end{figure}

\begin{figure*}[ht]
\centering
    \includegraphics[width=1.0\textwidth]{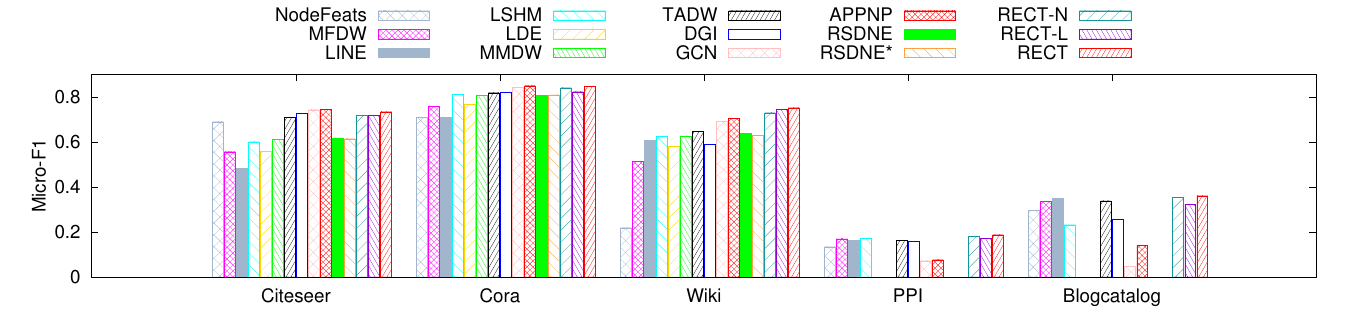}
\caption{Averaged node classification performance (Micro-F1) with balanced labels.}
\label{fig_balanced}
\end{figure*}

\mysubsection*{Effect of Seen/Unseen Class Number}
Without loss of generality, we set the training rate to 50\%, and vary the seen class number from six to one on Citeseer.
As shown in Fig.~\ref{fig_remove_number}, \myalg\ and \mydeepalg\ can constantly benefit from the completely-imbalanced labels.
For example, even with only one seen class, \myalg\ still outperforms (its unsupervised version) MFDW; \mydeepalg\ still outperforms (its unsupervised version) \mydeepalg-N.
Besides, the performance of \mydeepalg-L declines smoothly when the unseen class number grows, clearly demonstrating the effectiveness of exploring the class-semantic knowledge for the studied problem.

\mysubsection*{Effect of Intra-class Similarity and Inter-class Dissimilarity Modeling in RSDNE}
To investigate the effect of these two parts, we test the following settings of \myalg:
\begin{enumerate}
  \item $\mathcal{J}_{DW}$: only modeling the graph structure (Eq.~\ref{eq_mf_basic}).
  \item $\mathcal{J}_{DW} {+} \mathcal{J}_{intra}$: modeling graph structure and intra-class similarity (selecting intra-class neighbors adaptively (Eq.~\ref{eq_intra_cost})).
  \item $\mathcal{J}_{DW} + $ random($\mathcal{J}_{intra}$): modeling graph structure and intra-class similarity (selecting intra-class neighbors randomly).
  \item $\mathcal{J}_{DW} {+} \mathcal{J}_{inter}$: modeling graph structure and inter-class dissimilarity (Eq.~\ref{eq_inter_cost}).
\end{enumerate}

As shown in Fig.~\ref{fig_cite_function}, when either eliminating the effect of intra-class or inter-class modeling part, the performance degrades.
This suggests that these two parts contain complementary information to each other for graph embedding.
Another interesting observation is that: although randomly selecting intra-class neighbors (i.e., $\mathcal{J}_{DW} + $ random($\mathcal{J}_{intra}$)) does  not show the best result, it still outperforms modeling graph structure alone (i.e., $\mathcal{J}_{DW}$) significantly, especially when  the labeled dataset becomes larger.
This again shows the effectiveness of modeling the (relaxed) intra-class similarity.

\subsubsection{Sensitivity Analysis}
\mysubsection*{Sensitivity of Parameter}
In the proposed method \myalg, there is an important parameter $\alpha$ which balances the contributions of graph structure and label information.
Figure~\ref{fig_para} shows the classification performance with respect to this parameter on Citeseer (with the regularization parameter $\lambda{=}0.1$).
It can be observed that our method is not sensitive to $\alpha$ especially when $\alpha \in [10^{-2}, ... , 10^{1}]$.

\mysubsection*{Sensitivity of Embedding Dimension}
We vary embedding dimensions in \{100, 200, 300, 400, 500\}.
As shown in Fig.~\ref{fig_effect_dim}, all our methods are not very sensitive to the embedding dimension.
In addition, we can find that \mydeepalg\ always outperforms its two sub-methods \mydeepalg-N and \mydeepalg-L.
Another observation needs to be noted is that \mydeepalg\ still outperforms all baselines when the embedding dimension is set to 200.
All these observations demonstrate the superiority of our methods.

\subsection{Test with Balanced Labels}
We also test the situation where the labeled data is generally balanced, i.e., the labeled data covers all classes.
Figure~\ref{fig_balanced} shows the averaged classification performance (training ratio also varies in [10\%, 30\%, 50\%]). We can get the following two interesting observations.

The first and the most interesting observation is that our methods have comparable performance to state-of-the-art semi-supervised methods, although our methods are not specially designed for this balanced case.
Specifically, RSDNE and RSDNE$^{*}$ obtain comparable performance to LSHM, LDE and MMDW;
\mydeepalg\ obtains comparable (and sometimes much superior) results to GCN and APPNP.
This suggests that our methods would be favorably demanded by the scenario where the quality of the labeled data cannot be guaranteed.

The second observation is that our deep method \mydeepalg\ is more robust than the compared deep semi-supervised GNNs.
As shown in Fig.~\ref{fig_balanced}, GCN and APPNP perform poorly on two multi-label datasets PPI and Blogcatalog.
This may due to the imbalance of labels in these two datasets.
In contrast, our method RECT is much more stable on all datasets.
This might indicate that the distribution of class-semantic descriptions over various classes is more balanced than that of class labels.
All these observations show the general applicability of our approximation models (i.e., Eq.~\ref{eq_intra_cost}, Eq.~\ref{eq_inter_cost} and Eq.~\ref{eq_semantic_loss}) which could also be considered in other related applications.


\begin{figure}[t]
\centering
    \includegraphics[width=0.40\textwidth]{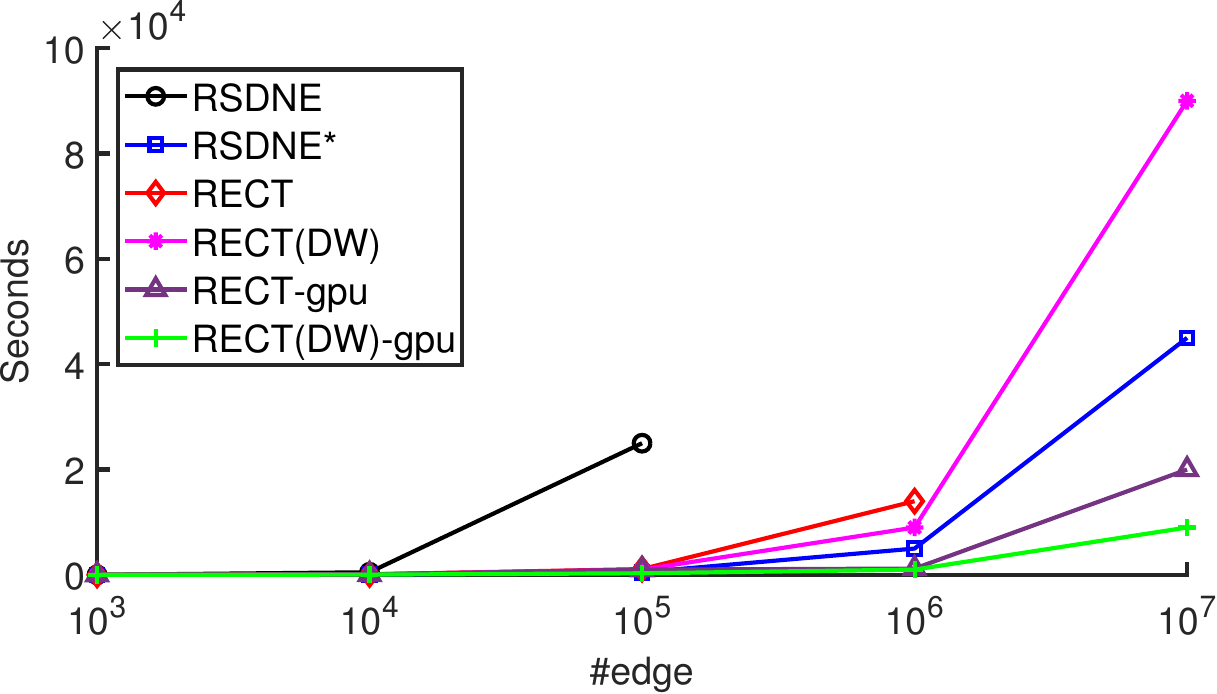}
\caption{
Training time of our methods. We do not report the running time when it exceeds 25 hours.
}
\label{fig_scalability}
\end{figure}

\subsection{Scalability Test}
Following~\cite{kipf2017semi}, we use random graphs to test the scalability.
Specifically, we create a random graph with $n$ nodes and $2n$ edges.
We take the identity matrix as the input feature matrix $X$.
We give same label for all nodes, set training rate to 10\% and do not remove any labeled nodes.

We test RSDNE, RSDNE$^{*}$ and together with different implements of our RECT method:
1) RECT is the original proposed GNN method;
2) RECT(DW) adopts the objective of DeepWalk for graph structure preserving (i.e., Eq.~\ref{eq_deepwalk_original});
3) RECT-gpu is the GPU implementation of RECT;
4) RECT(DW)-gpu is the GPU implementation of RECT(DW).
Our methods are written in Python 3.0 and Pytorch 1.0.
All the codes are running on a server with 16 CPU cores, 32 GB main memory, and an Nvidia Titan V GPU.
Figure~\ref{fig_scalability} shows the running times.
We can find that RSDNE$^{*}$ is more efficient than RSDNE, which is consistent with our theoretical analysis.
We also find that RECT(DW) is more efficient than RECT, indicating we can adopt various graph structure preserving objectives to accelerate our method.
In addition, the GPU implementation of GNN methods can largely accelerate the training speed.

\section{Conclusion} \label{section_conclusion}
This paper investigates the graph embedding problem in the completely-imbalanced label setting where the labeled data cannot cover all classes.
We firstly propose a shallow method named RSDNE.
Specifically, to benefit from completely-imbalanced labels, RSDNE guarantees both intra-class similarity and inter-class dissimilarity in an approximate way.
Then, to leverage the power of deep neural networks, we propose \mydeepalg, a new class of GNN.
Unlike RSDNE, \mydeepalg\ utilizes completely-imbalanced labels by exploring the class-semantic descriptions, which enables it to handle graphs with node features and multi-label setting.
Finally, extensive experiments are conducted on several real-world datasets to demonstrate the effectiveness of the proposed methods.
In the future, we plan to extend our methods to other types of graphs, such as heterogeneous graphs and signed graphs.
\bibliographystyle{IEEEtran}
\bibliography{simple}

\begin{thebibliography}{10}
\providecommand{\url}[1]{#1}
\csname url@samestyle\endcsname
\providecommand{\newblock}{\relax}
\providecommand{\bibinfo}[2]{#2}
\providecommand{\BIBentrySTDinterwordspacing}{\spaceskip=0pt\relax}
\providecommand{\BIBentryALTinterwordstretchfactor}{4}
\providecommand{\BIBentryALTinterwordspacing}{\spaceskip=\fontdimen2\font plus
\BIBentryALTinterwordstretchfactor\fontdimen3\font minus \fontdimen4\font\relax}
\providecommand{\BIBforeignlanguage}[2]{{%
\expandafter\ifx\csname l@#1\endcsname\relax
\typeout{** WARNING: IEEEtran.bst: No hyphenation pattern has been}%
\typeout{** loaded for the language `#1'. Using the pattern for}%
\typeout{** the default language instead.}%
\else
\language=\csname l@#1\endcsname
\fi
#2}}
\providecommand{\BIBdecl}{\relax}
\BIBdecl

\bibitem{wang2016causality}
Z.~Wang, C.~Wang, J.~Pei, X.~Ye, and S.~Y. Philip, ``Causality based propagation history ranking in social networks.'' in \emph{Proceedings of the Twenty-Fifth International Joint Conference on Artificial Intelligence}, 2016, pp. 3917--3923.

\bibitem{wang2019ecoqug}
C.~Wang, H.~Wang, C.~Zhou, J.~Li, and H.~Gao, ``Ecoqug: An effective ensemble community scoring function,'' in \emph{Proceedings of the 35th International Conference on Data Engineering}.\hskip 1em plus 0.5em minus 0.4em\relax IEEE, 2019, pp. 1702--1705.

\bibitem{wang2018parallel}
H.~Wang, N.~Li, J.~Li, and H.~Gao, ``Parallel algorithms for flexible pattern matching on big graphs,'' \emph{Information Sciences}, vol. 436, pp. 418--440, 2018.

\bibitem{li2019scaling}
W.~Li, M.~Qiao, L.~Qin, Y.~Zhang, L.~Chang, and X.~Lin, ``Scaling distance labeling on small-world networks,'' in \emph{Proceedings of the ACM SIGMOD International Conference on Management of Data}.\hskip 1em plus 0.5em minus 0.4em\relax ACM, 2019, pp. 1060--1077.

\bibitem{buhlmann2011statistics}
P.~B{\"u}hlmann and S.~Van De~Geer, \emph{Statistics for high-dimensional data: methods, theory and applications}.\hskip 1em plus 0.5em minus 0.4em\relax Springer Science \& Business Media, 2011.

\bibitem{perozzi2014deepwalk}
B.~Perozzi, R.~Al-Rfou, and S.~Skiena, ``Deepwalk: Online learning of social representations,'' in \emph{Proceedings of the 20th ACM SIGKDD International Conference on Knowledge Discovery and Data Mining}.\hskip 1em plus 0.5em minus 0.4em\relax ACM, 2014, pp. 701--710.

\bibitem{grover2016node2vec}
A.~Grover and J.~Leskovec, ``node2vec: Scalable feature learning for networks,'' in \emph{Proceedings of the 22nd ACM SIGKDD International Conference on Knowledge Discovery and Data Mining}.\hskip 1em plus 0.5em minus 0.4em\relax ACM, 2016, pp. 855--864.

\bibitem{wang2018deepdirect}
C.~Wang, C.~Wang, Z.~Wang, X.~Ye, J.~X. Yu, and B.~Wang, ``Deepdirect: Learning directions of social ties with edge-based network embedding,'' \emph{IEEE Transactions on Knowledge and Data Engineering}, vol.~31, no.~12, pp. 2277--2291, 2019.

\bibitem{tang2016visualizing}
J.~Tang, J.~Liu, M.~Zhang, and Q.~Mei, ``Visualizing large-scale and high-dimensional data,'' in \emph{Proceedings of the 25th International Conference on World Wide Web}.\hskip 1em plus 0.5em minus 0.4em\relax International World Wide Web Conferences Steering Committee, 2016, pp. 287--297.

\bibitem{moyano2017learning}
L.~G. Moyano, ``Learning network representations,'' \emph{The European Physical Journal Special Topics}, vol. 226, no.~3, pp. 499--518, 2017.

\bibitem{tenenbaum2000global}
J.~B. Tenenbaum, V.~De~Silva, and J.~C. Langford, ``A global geometric framework for nonlinear dimensionality reduction,'' \emph{Science}, vol. 290, no. 5500, pp. 2319--2323, 2000.

\bibitem{roweis2000nonlinear}
S.~T. Roweis and L.~K. Saul, ``Nonlinear dimensionality reduction by locally linear embedding,'' \emph{Science}, vol. 290, no. 5500, pp. 2323--2326, 2000.

\bibitem{tang2015line}
J.~Tang, M.~Qu, M.~Wang, M.~Zhang, J.~Yan, and Q.~Mei, ``Line: Large-scale information network embedding,'' in \emph{Proceedings of the 24th International Conference on World Wide Web}.\hskip 1em plus 0.5em minus 0.4em\relax International World Wide Web Conferences Steering Committee, 2015, pp. 1067--1077.

\bibitem{wang2017equivalence}
Q.~Wang, Z.~Wang, and X.~Ye, ``Equivalence between line and matrix factorization,'' \emph{arXiv preprint arXiv:1707.05926}, 2017.

\bibitem{wang2016structural}
D.~Wang, P.~Cui, and W.~Zhu, ``Structural deep network embedding,'' in \emph{Proceedings of the 22nd ACM SIGKDD International Conference on Knowledge Discovery and Data Mining}.\hskip 1em plus 0.5em minus 0.4em\relax ACM, 2016, pp. 1225--1234.

\bibitem{gori2005new}
M.~Gori, G.~Monfardini, and F.~Scarselli, ``A new model for learning in graph domains,'' in \emph{Proceedings of the International Joint Conference on Neural Networks}, vol.~2.\hskip 1em plus 0.5em minus 0.4em\relax IEEE, 2005, pp. 729--734.

\bibitem{scarselli2008graph}
F.~Scarselli, M.~Gori, A.~C. Tsoi, M.~Hagenbuchner, and G.~Monfardini, ``The graph neural network model,'' \emph{IEEE Transactions on Neural Networks}, vol.~20, no.~1, pp. 61--80, 2008.

\bibitem{jacob2014learning}
Y.~Jacob, L.~Denoyer, and P.~Gallinari, ``Learning latent representations of nodes for classifying in heterogeneous social networks,'' in \emph{Proceedings of the 7th ACM International Conference on Web Search and Data Mining}.\hskip 1em plus 0.5em minus 0.4em\relax ACM, 2014, pp. 373--382.

\bibitem{wang2016linked}
S.~Wang, J.~Tang, C.~Aggarwal, and H.~Liu, ``Linked document embedding for classification,'' in \emph{Proceedings of the 25th ACM International on Conference on Information and Knowledge Management}.\hskip 1em plus 0.5em minus 0.4em\relax ACM, 2016, pp. 115--124.

\bibitem{tu2016max}
C.~Tu, W.~Zhang, Z.~Liu, and M.~Sun, ``Max-margin deepwalk: discriminative learning of network representation,'' in \emph{Proceedings of the Twenty-Fifth International Joint Conference on Artificial Intelligence}, 2016, pp. 3889--3895.

\bibitem{kipf2017semi}
T.~N. Kipf and M.~Welling, ``Semi-supervised classification with graph convolutional networks,'' in \emph{International Conference on Learning Representations}, 2017.

\bibitem{velickovic2018graph}
P.~Veli{\v{c}}kovi{\'{c}}, G.~Cucurull, A.~Casanova, A.~Romero, P.~Li{\`{o}}, and Y.~Bengio, ``Graph attention networks,'' in \emph{International Conference on Learning Representations}, 2018.

\bibitem{klicpera2019predict}
J.~Klicpera, A.~Bojchevski, and S.~G{\"u}nnemann, ``Predict then propagate: Graph neural networks meet personalized pagerank,'' in \emph{International Conference on Learning Representations}, 2019.

\bibitem{hearst1998support}
M.~A. Hearst, S.~T. Dumais, E.~Osuna, J.~Platt, and B.~Scholkopf, ``Support vector machines,'' \emph{IEEE Intelligent Systems and Their Applications}, vol.~13, no.~4, pp. 18--28, 1998.

\bibitem{de2005tutorial}
P.-T. De~Boer, D.~P. Kroese, S.~Mannor, and R.~Y. Rubinstein, ``A tutorial on the cross-entropy method,'' \emph{Annals of Operations Research}, vol. 134, no.~1, pp. 19--67, 2005.

\bibitem{de2017inducing}
G.~de~Melo, ``Inducing conceptual embedding spaces from wikipedia,'' in \emph{Proceedings of the 26th International Conference on World Wide Web Companion}.\hskip 1em plus 0.5em minus 0.4em\relax International World Wide Web Conferences Steering Committee, 2017, pp. 43--50.

\bibitem{mccallum2000automating}
A.~K. McCallum, K.~Nigam, J.~Rennie, and K.~Seymore, ``Automating the construction of internet portals with machine learning,'' \emph{Information Retrieval}, vol.~3, no.~2, pp. 127--163, 2000.

\bibitem{he2009learning}
H.~He and E.~A. Garcia, ``Learning from imbalanced data,'' \emph{IEEE Transactions on Knowledge and Data Engineering}, vol.~21, no.~9, pp. 1263--1284, 2009.

\bibitem{lecun2015deep}
Y.~LeCun, Y.~Bengio, and G.~Hinton, ``Deep learning,'' \emph{Nature}, vol. 521, no. 7553, pp. 436--444, 2015.

\bibitem{herrera2016multilabel}
F.~Herrera, F.~Charte, A.~J. Rivera, and M.~J. Del~Jesus, ``Multilabel classification,'' in \emph{Multilabel Classification}.\hskip 1em plus 0.5em minus 0.4em\relax Springer, 2016, pp. 17--31.

\bibitem{xian2018zero}
Y.~Xian, C.~H. Lampert, B.~Schiele, and Z.~Akata, ``Zero-shot learning-a comprehensive evaluation of the good, the bad and the ugly,'' \emph{IEEE Transactions on Pattern Analysis and Machine Intelligence}, vol.~41, no.~9, pp. 2251--2265, 2018.

\bibitem{mucherino2009k}
A.~Mucherino, P.~J. Papajorgji, and P.~M. Pardalos, ``K-nearest neighbor classification,'' in \emph{Data Mining in Agriculture}.\hskip 1em plus 0.5em minus 0.4em\relax Springer, 2009, pp. 83--106.

\bibitem{battaglia2018relational}
P.~W. Battaglia, J.~B. Hamrick, V.~Bapst, A.~Sanchez-Gonzalez, V.~Zambaldi, M.~Malinowski, A.~Tacchetti, D.~Raposo, A.~Santoro, R.~Faulkner \emph{et~al.}, ``Relational inductive biases, deep learning, and graph networks,'' \emph{arXiv preprint arXiv:1806.01261}, 2018.

\bibitem{wu2019comprehensive}
Z.~Wu, S.~Pan, F.~Chen, G.~Long, C.~Zhang, and P.~S. Yu, ``A comprehensive survey on graph neural networks,'' \emph{arXiv preprint arXiv:1901.00596}, 2019.

\bibitem{Cai2019TiiTaxi}
Z.~Cai, X.~Zheng, and J.~Yu, ``A differential-private framework for urban traffic flows estimation via taxi companies,'' \emph{IEEE Transactions on Industrial Informatics}, 2019.

\bibitem{Cai2018TNSECPSS}
Z.~Cai and X.~Zheng, ``A private and efficient mechanism for data uploading in smart cyber-physical systems,'' \emph{IEEE Transactions on Network Science and Engineering}, 2018.

\bibitem{sun2007cost}
Y.~Sun, M.~S. Kamel, A.~K. Wong, and Y.~Wang, ``Cost-sensitive boosting for classification of imbalanced data,'' \emph{Pattern Recognition}, vol.~40, no.~12, pp. 3358--3378, 2007.

\bibitem{yen2009cluster}
S.-J. Yen and Y.-S. Lee, ``Cluster-based under-sampling approaches for imbalanced data distributions,'' \emph{Expert Systems with Applications}, vol.~36, no.~3, pp. 5718--5727, 2009.

\bibitem{yan2017framework}
Y.~Yan, T.~Yang, Y.~Yang, and J.~Chen, ``A framework of online learning with imbalanced streaming data,'' in \emph{Proceedings of the Thirty-First AAAI Conference on Artificial Intelligence}, 2017, pp. 2817--2823.

\bibitem{krawczyk2016learning}
B.~Krawczyk, ``Learning from imbalanced data: open challenges and future directions,'' \emph{Progress in Artificial Intelligence}, vol.~5, no.~4, pp. 221--232, 2016.

\bibitem{lampert2009learning}
C.~H. Lampert, H.~Nickisch, and S.~Harmeling, ``Learning to detect unseen object classes by between-class attribute transfer,'' in \emph{Proceedings of the IEEE Conference on Computer Vision and Pattern Recognition}.\hskip 1em plus 0.5em minus 0.4em\relax IEEE, 2009, pp. 951--958.

\bibitem{farhadi2009describing}
A.~Farhadi, I.~Endres, D.~Hoiem, and D.~Forsyth, ``Describing objects by their attributes,'' in \emph{Proceedings of the IEEE Conference on Computer Vision and Pattern Recognition}.\hskip 1em plus 0.5em minus 0.4em\relax IEEE, 2009, pp. 1778--1785.

\bibitem{mikolov2013efficient}
T.~Mikolov, K.~Chen, G.~Corrado, and J.~Dean, ``Efficient estimation of word representations in vector space,'' \emph{arXiv preprint arXiv:1301.3781}, 2013.

\bibitem{geng2018recent}
C.~Geng, S.-j. Huang, and S.~Chen, ``Recent advances in open set recognition: A survey,'' \emph{arXiv preprint arXiv:1811.08581}, 2018.

\bibitem{wang2019zero}
Z.~Wang, X.~Ye, and Q.~Wang, ``Zero-shot feature selection via exploiting semantic knowledge,'' \emph{arXiv preprint arXiv:1908.03464}, 2019.

\bibitem{yang2015network}
C.~Yang, Z.~Liu, D.~Zhao, M.~Sun, and E.~Y. Chang, ``Network representation learning with rich text information,'' in \emph{Proceedings of the Twenty-Fourth International Joint Conference on Artificial Intelligence}, 2015, pp. 2111--2117.

\bibitem{polok2014fast}
L.~Polok, V.~Ila, and P.~Smrz, ``Fast radix sort for sparse linear algebra on gpu,'' in \emph{Proceedings of the High Performance Computing Symposium}.\hskip 1em plus 0.5em minus 0.4em\relax Society for Computer Simulation International, 2014, pp. 1--8.

\bibitem{belkin2007convergence}
M.~Belkin and P.~Niyogi, ``Convergence of laplacian eigenmaps,'' in \emph{Advances in Neural Information Processing Systems}, 2007, pp. 129--136.

\bibitem{bertsekas1997nonlinear}
D.~P. Bertsekas, ``Nonlinear programming,'' \emph{Journal of the Operational Research Society}, vol.~48, no.~3, pp. 334--334, 1997.

\bibitem{glorot2010understanding}
X.~Glorot and Y.~Bengio, ``Understanding the difficulty of training deep feedforward neural networks,'' in \emph{Proceedings of the 13th International Conference on Artificial Intelligence and Statistics}, 2010, pp. 249--256.

\bibitem{rao2015collaborative}
N.~Rao, H.-F. Yu, P.~K. Ravikumar, and I.~S. Dhillon, ``Collaborative filtering with graph information: Consistency and scalable methods,'' in \emph{Advances in Neural Information Processing Systems}, 2015, pp. 2107--2115.

\bibitem{lin2006inter}
D.~Lin and X.~Tang, ``Inter-modality face recognition,'' in \emph{European Conference on Computer Vision}.\hskip 1em plus 0.5em minus 0.4em\relax Springer, 2006, pp. 13--26.

\bibitem{kan2016multi}
M.~Kan, S.~Shan, H.~Zhang, S.~Lao, and X.~Chen, ``Multi-view discriminant analysis,'' \emph{IEEE Transactions on Pattern Analysis and Machine Intelligence}, vol.~38, no.~1, pp. 188--194, 2016.

\bibitem{sen2008collective}
P.~Sen, G.~Namata, M.~Bilgic, L.~Getoor, B.~Galligher, and T.~Eliassi-Rad, ``Collective classification in network data,'' \emph{AI Magazine}, vol.~29, no.~3, p.~93, 2008.

\bibitem{tang2009relational}
L.~Tang and H.~Liu, ``Relational learning via latent social dimensions,'' in \emph{Proceedings of the 15th ACM SIGKDD International Conference on Knowledge Discovery and Data Mining}.\hskip 1em plus 0.5em minus 0.4em\relax ACM, 2009, pp. 817--826.

\bibitem{velivckovic2019deep}
P.~Veli{\v{c}}kovi{\'c}, W.~Fedus, W.~L. Hamilton, P.~Li{\`o}, Y.~Bengio, and R.~D. Hjelm, ``Deep graph infomax,'' in \emph{International Conference on Learning Representations}, 2019.

\bibitem{zhu2006semi}
X.~Zhu, ``Semi-supervised learning literature survey,'' \emph{Computer Science, University of Wisconsin-Madison}, vol.~2, no.~3, p.~4, 2006.

\bibitem{he2015delving}
K.~He, X.~Zhang, S.~Ren, and J.~Sun, ``Delving deep into rectifiers: Surpassing human-level performance on imagenet classification,'' in \emph{Proceedings of the IEEE International Conference on Computer Vision}.\hskip 1em plus 0.5em minus 0.4em\relax IEEE, 2015, pp. 1026--1034.

\bibitem{kingma2014adam}
D.~P. Kingma and J.~Ba, ``Adam: A method for stochastic optimization,'' \emph{arXiv preprint arXiv:1412.6980}, 2014.

\bibitem{yang1999evaluation}
Y.~Yang, ``An evaluation of statistical approaches to text categorization,'' \emph{Information Retrieval}, vol.~1, no.~1, pp. 69--90, 1999.

\bibitem{wang2018rsdne}
Z.~Wang, X.~Ye, C.~Wang, Y.~Wu, C.~Wang, and K.~Liang, ``{RSDNE}: Exploring relaxed similarity and dissimilarity from completely-imbalanced labels for network embedding,'' in \emph{Proceedings of the Thirty-Second AAAI Conference on Artificial Intelligence}, 2018, pp. 475--482.

\bibitem{maaten2008visualizing}
L.~v.~d. Maaten and G.~Hinton, ``Visualizing data using t-sne,'' \emph{Journal of Machine Learning Research}, vol.~9, no. Nov, pp. 2579--2605, 2008.

\end{thebibliography}
\ifCLASSOPTIONcompsoc
  \section*{Acknowledgments}
\else
  \section*{Acknowledgment}
\fi

Zheng Wang is supported in part by the National Natural Science Foundation of China (No.~61902020) and the China Postdoctoral Science Foundation Funded Project (No.~2018M640066).
Chaokun Wang is supported in part by the National Natural Science Foundation of China (No.~61872207) and Baidu Inc.
Xiaojun Ye is supported by the National Key R\&D Program of China (No.~2019QY1402).
Philip S.~Yu is supported in part by NSF under grants (III-1526499, III-1763325, III-1909323, and CNS-1930941).

\ifCLASSOPTIONcaptionsoff
  \newpage
\fi

\end{document}